%% file: main.tex
\documentclass{article}
\usepackage[T1]{fontenc}
\usepackage[margin=2.8cm]{geometry}
\usepackage{graphicx}
\graphicspath{{figures/}}
\usepackage{picinpar}
\usepackage{subcaption}
\usepackage{xcolor}
\usepackage{colortbl}
\usepackage{booktabs}
\usepackage{multirow}
\usepackage{enumerate}
\usepackage{enumitem}
\usepackage{pifont}   
\usepackage{array}    

\usepackage{tikz}
\usetikzlibrary{arrows.meta, backgrounds, positioning}
\usepackage{amsmath}
\usepackage{amssymb}
\usepackage{amsthm}
\usepackage{algorithm}
\usepackage{algpseudocode}
\usepackage{wrapfig}      
\usepackage{placeins}     
\usepackage{pgfplots}     
\pgfplotsset{compat=1.18}
\usepgfplotslibrary{groupplots}

\newtheorem{proposition}{Proposition}
\newtheorem{theorem}{Theorem}

\usepackage{hyperref}            
\usepackage{cleveref}            
\usepackage{kpfonts}             
\usepackage{natbib}              
\usepackage{tcolorbox}           
\tcbuselibrary{skins}
\usepackage{fontawesome5}        
\usepackage{xspace}              
\usepackage{microtype}           

\usepackage[]{authblk}
\usepackage{fancyhdr}
\fancypagestyle{firstpage}{%
  \fancyhead[L]{\raisebox{0.05\height}{\hypersetup{hidelinks}\href{https://www.olaresearch.org/}{\includegraphics[height=1.2em]{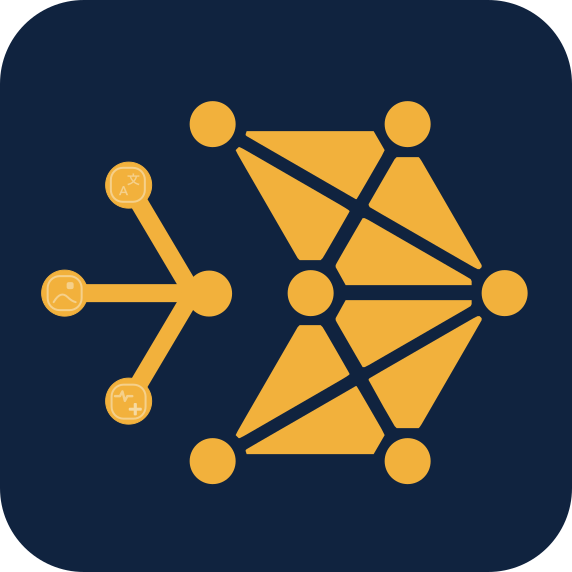}}}~Preprint}%
  \fancyhead[R]{\raisebox{0.05\height}{\hypersetup{hidelinks}\href{https://www.ellisinstitute.fi}{\usebox{\ellislogo}}}}%
}

\newsavebox{\ellislogo}
\sbox{\ellislogo}{\includegraphics[height=1em]{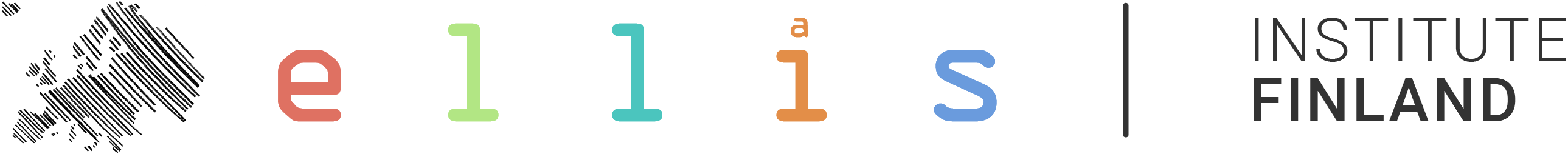}}

\setcitestyle{authoryear,round,citesep={;},aysep={,},yysep={;}}
\definecolor{colornavy}{HTML}{1B365D}      
\definecolor{coloramber}{HTML}{E0913A}     
\definecolor{colorviolet}{HTML}{8B5DCB}    
\definecolor{colormint}{HTML}{004A43}      
\definecolor{colorblue}{HTML}{1D70B8}      
\definecolor{colorforest}{HTML}{194D33}    
\definecolor{colorsage}{HTML}{5F8575}      
\definecolor{colorburgundy}{HTML}{800020}   
\definecolor{colorcrimson}{HTML}{B30D26}   
\definecolor{colorcoral}{HTML}{E07A5F}     
\definecolor{colorindigo}{HTML}{3F51B5}    
\definecolor{colorplum}{HTML}{5C2D50}      
\definecolor{colorslate}{HTML}{475569}     

\definecolor{linkcolor}{rgb}{0.21,0.49,0.74}

\hypersetup{
  colorlinks=false,
  pdfborder={0 0 1},
  linkbordercolor=colornavy,  
  citebordercolor=colorforest, 
  urlbordercolor=colorblue     
}

\newcommand{\posd}[1]{\textcolor{green!40!black}{#1}}

\renewenvironment{abstract}{%
  \begin{tcolorbox}[
    colback=colornavy!4,
    colframe=colornavy,
    leftrule=4pt,
    rightrule=0.5pt,
    toprule=0.5pt,
    bottomrule=0.5pt,
    arc=3pt,
    boxsep=5pt,
    left=12pt,
    right=12pt,
    top=10pt,
    bottom=10pt,
    title=\textbf{Abstract},
    coltitle=colormint,
    attach title to upper,
    after title={\par\medskip}
  ]%
}{%
  \medskip{\color{colornavy}\hrule height 0.5pt}\medskip
  \small\noindent
  {\hypersetup{hidelinks}%
  \begin{tabular}{@{}l@{\hspace{0.5em}}l@{}}
    \makebox[1.15em][c]{\textcolor{colormint}{\faGlobe}} & \textbf{Project Website:} \href{https://www.olaresearch.org/LabelFreeSteering/}{\textit{olaresearch.org/LabelFreeSteering}} \\
    \makebox[1.15em][c]{\textcolor{colormint}{\faGithub}} & \textbf{GitHub Code:} \href{https://github.com/OLAResearch/label-free-bias-steering}{\textit{github.com/OLAResearch/LabelFreeSteering}} \\
    \makebox[1.15em][c]{\raisebox{-0.12\height}{\includegraphics[height=1.2em]{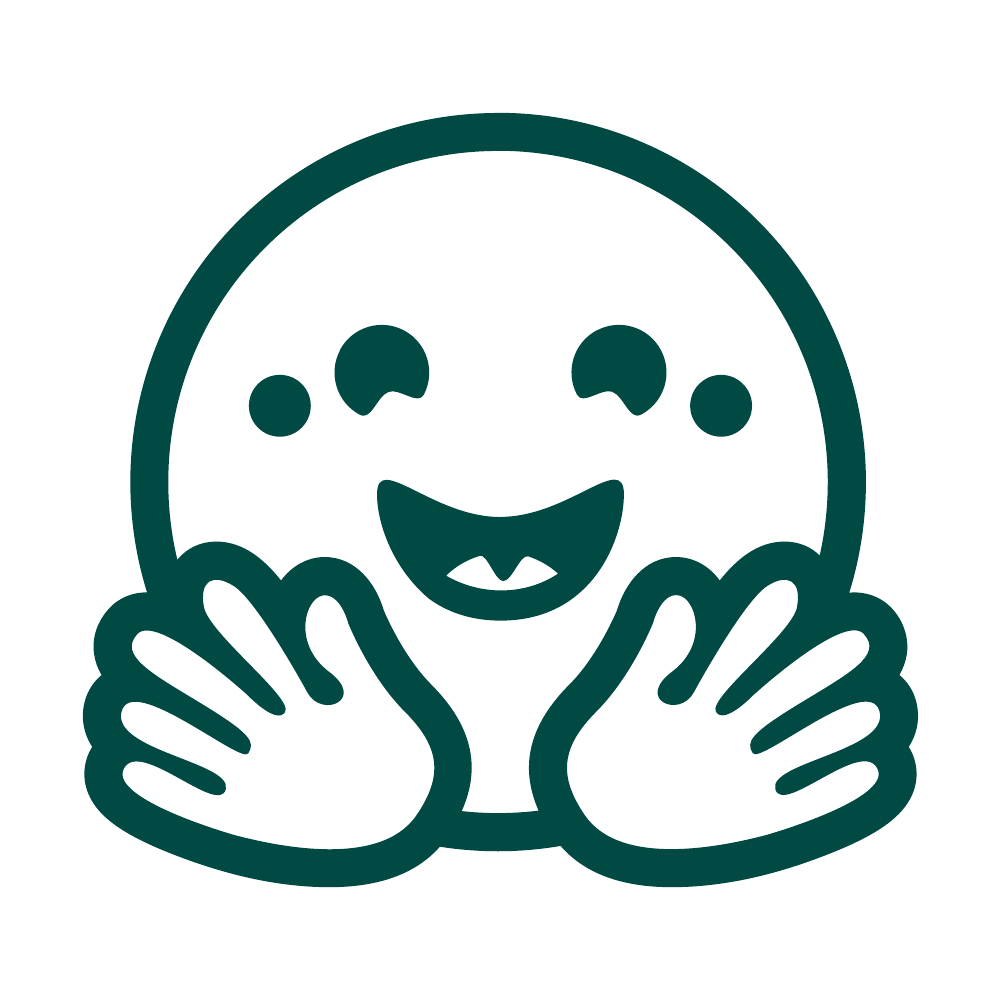}}} & \textbf{HuggingFace Checkpoints:} \href{https://huggingface.co/OLAResearchX/label-free-bias-steering-checkpoints}{\textit{hf.co/OLAResearchX/label-free-bias-steering-checkpoints}}
  \end{tabular}%
  }
  \end{tcolorbox}%
}

\usepackage[]{lineno}

\title{Label-Free Steering: Compressing Test-Time Reinforcement Learning into Bias-Only Subspaces}

\author[1,2]{Naveen Vakada}
\author[1,2]{Mingyuan Li}
\author[1,2]{Shaoxiong Ji}
\affil[1]{University of Turku, Finland}
\affil[2]{ELLIS Institute Finland}
\affil[ ]{\textit{Correspondence: \texttt{navaka@utu.fi}}}

\date{}

\begin{document}

\maketitle
\thispagestyle{firstpage}

\begin{abstract}
Test-time reinforcement learning (TTRL) enables models to improve their reasoning without relying on labeled training data, but existing approaches typically optimize a large fraction of the model parameters.
This raises a natural question: can effective test-time adaptation emerge when both the reward signal and the optimization space are severely restricted?
We answer this question with \textit{label-free bias-only TTRL}, which uses majority-vote pseudo-labels as rewards and optimizes only $\sim$100K bias parameters while keeping the pretrained backbone frozen.
On MATH-500, our approach reaches 76.67\% accuracy with Qwen2.5-7B, slightly exceeding our own labeled bias-steering reproduction while optimizing 76,000$\times$ fewer parameters than full-parameter TTRL.
The same training procedure improves performance across vision-language and audio reasoning tasks, including MathVista, AI2D, LogicVista, and MMAU.
We further show that the learned steering vectors transfer to 4,500 held-out MATH problems, indicating that the adaptation is not limited to the problems used during test-time optimization.
Finally, we analyze why this highly restricted adaptation can work, showing that majority-vote reliability improves with rollout consensus and that bias subspaces with greater accessible gradient energy exhibit stronger downstream trainability.
These results demonstrate that substantial test-time adaptation can emerge from optimizing a tiny bias-only subspace using entirely label-free rewards.
\end{abstract}

\section{Introduction}
\label{sec:introduction}

Test-time learning aims to improve a model using only information available at
deployment, without requiring additional labeled training data.
Recent test-time reinforcement learning (TTRL) methods show that useful
optimization signals can be constructed from a model's own generations, for
example through agreement among multiple reasoning rollouts
\citep{zuo2025ttrl,wang2022selfconsistency}.
However, existing approaches typically perform this adaptation in a large
parameter space, often updating most or all of the model.
This leaves a basic question unresolved:
\emph{how much optimization capacity is actually necessary for self-generated
test-time learning?}

We study this question under an extreme optimization bottleneck.
The pretrained backbone is frozen, while adaptation is restricted to only
$\sim$100K trainable parameters in a 7B-scale model.
At the same time, no ground-truth reward is available: the learning signal must
be inferred entirely from the model's own test-time rollouts.
In this regime, a noisy self-generated signal must induce useful behavioral
change through only a minute fraction of the model's parameter space.
We therefore treat parameter efficiency not merely as a computational objective,
but as a way to study what makes restricted test-time adaptation possible.


Our experiments reveal that substantial adaptation survives this bottleneck,
but only for suitable trainable subspaces.
As shown in Figure~\ref{fig:subspace-comparison}, restricting all methods to
approximately the same 100K-parameter budget does not produce comparable
behavior: bias-only adaptation consistently improves performance across all six
evaluated model--benchmark settings, whereas parameter-matched alternatives can
be substantially less effective.
The key variable is therefore not simply the number of trainable parameters,
but the optimization directions exposed by those parameters.

\begin{wrapfigure}{r}{0.49\textwidth}
\vspace{-10pt}
\centering

\resizebox{\linewidth}{!}{%
\begin{tikzpicture}[
    title/.style={font=\small\bfseries},
    legend/.style={font=\scriptsize},
    rowlab/.style={font=\small, anchor=east},
    ticklab/.style={font=\scriptsize},
]

\node[title] at (0.25,1.35)
{Same 100{,}352-parameter budget, different subspaces};

\fill[linkcolor!85]
    (-4.10,0.72) rectangle (-3.86,0.88);
\node[legend, anchor=west] at (-3.78,0.80)
    {\textbf{Bias-only (ours)}};

\fill[black!45]
    (-1.72,0.72) rectangle (-1.48,0.88);
\node[legend, anchor=west] at (-1.40,0.80)
    {LoRA (exact-match)};

\fill[orange!70!black]
    (0.92,0.72) rectangle (1.16,0.88);
\node[legend, anchor=west] at (1.24,0.80)
    {ReFT};

\draw[gray!55, dashed, line width=0.45pt]
    (0,0.35) -- (0,-4.05);

\draw[gray!65, line width=0.45pt]
    (-1.80,-4.05) -- (3.55,-4.05);

\foreach \x/\lab in {
    -1.1/$-10$,
     0/$0$,
     1.1/$+10$,
     2.2/$+20$,
     3.3/$+30$
}{
    \draw[gray!65] (\x,-4.05) -- (\x,-4.16);
    \node[ticklab] at (\x,-4.35) {\lab};
}

\node[ticklab] at (0.85,-4.80)
{$\Delta$ accuracy vs.\ unadapted baseline (points)};

\node[rowlab] at (-1.90, 0.00) {MATH-500 (7B)};
\node[rowlab] at (-1.90,-0.72) {MATH-500 (Math-7B)};
\node[rowlab] at (-1.90,-1.44) {AI2D};
\node[rowlab] at (-1.90,-2.16) {LogicVista};
\node[rowlab] at (-1.90,-2.88) {MathVista};
\node[rowlab] at (-1.90,-3.60) {MMAU};


\fill[linkcolor!85]
    (0, 0.115) rectangle (2.585, 0.285);

\fill[black!45]
    (0,-0.085) rectangle (2.508, 0.085);

\fill[orange!70!black]
    (0,-0.285) rectangle (0.616,-0.115);

\fill[linkcolor!85]
    (0,-0.605) rectangle (3.385,-0.435);

\fill[black!45]
    (0,-0.805) rectangle (2.571,-0.635);

\fill[orange!70!black]
    (0,-1.005) rectangle (2.321,-0.835);

\fill[linkcolor!85]
    (0,-1.325) rectangle (0.518,-1.155);

\fill[black!45]
    (0,-1.525) rectangle (0.185,-1.355);

\fill[orange!70!black]
    (0,-1.725) rectangle (-1.460,-1.555);

\fill[linkcolor!85]
    (0,-2.045) rectangle (0.647,-1.875);

\fill[black!45]
    (0,-2.245) rectangle (-0.180,-2.075);

\fill[orange!70!black]
    (0,-2.445) rectangle (-1.563,-2.275);

\fill[linkcolor!85]
    (0,-2.765) rectangle (0.418,-2.595);

\fill[black!45]
    (0,-2.965) rectangle (0.239,-2.795);

\fill[orange!70!black]
    (0,-3.165) rectangle (-0.550,-2.995);

\fill[linkcolor!85]
    (0,-3.485) rectangle (0.382,-3.315);

\fill[black!45]
    (0,-3.685) rectangle (0.074,-3.515);

\fill[orange!70!black]
    (0,-3.885) rectangle (-0.426,-3.715);

\end{tikzpicture}%
}

\caption{
\textbf{Same parameter budget, different trainability.}
Under the same 100{,}352-parameter budget and label-free TTRL recipe,
different trainable subspaces exhibit markedly different behavior.
Bias-only adaptation improves all six model--benchmark settings,
showing that parameter count alone does not determine adaptation quality.
Baselines: LoRA (exact-match), \S\ref{sec:rq1}; bias-free ReFT \citep{wu2024reft}, Appendix~\ref{app:rq2-extra}.
}
\label{fig:subspace-comparison}

\vspace{-10pt}
\end{wrapfigure}
Using additive bias vectors as this restricted subspace, label-free TTRL reaches 76.67\% with Qwen2.5-7B and 79.50\% on MATH-500 with Qwen2.5-Math-7B, while
updating approximately $76{,}000\times$ fewer parameters than full-model
fine-tuning.
The learned intervention also extends beyond the problems used for test-time
optimization: freezing the learned bias vectors and applying them to 4,500
verified-disjoint MATH problems improves Qwen2.5-7B from 46.3\% to 70.9\% and
Qwen2.5-Math-7B from 52.5\% to 75.4\%, without further training.
The same procedure further improves reasoning across vision-language and audio
tasks, including AI2D, LogicVista, MathVista, and MMAU.

These results raise a deeper question:
\emph{why can such a small optimization space support substantial test-time
learning?}
We identify two complementary requirements.
First, the self-generated reward must provide a sufficiently reliable learning
signal.
For majority-vote pseudo-labeling, we show that when the correct answer has
positive probability margin, the probability of pseudo-label failure decreases
exponentially with the number of rollouts.
Second, the restricted subspace must expose useful directions of the underlying
RL gradient.
We characterize this property through \emph{accessible gradient energy}, the
squared norm of the full gradient projected into the trainable subspace, and
show that the guaranteed local improvement is governed by this quantity rather
than directly by subspace dimensionality.

This prediction is reflected empirically.
Across nine single-layer bias subspaces with exactly the same number of
trainable parameters, accessible gradient energy varies by orders of magnitude
and is strongly associated with downstream trainability
(Spearman $\rho=0.917$, $p=0.0013$).
Thus, a tiny subspace can support effective adaptation when it captures useful
optimization directions, while another subspace of identical size may fail.
Together with the pseudo-label analysis, this suggests a simple view of
restricted test-time learning: success requires both a sufficiently reliable
self-generated signal and a sufficiently aligned optimization subspace.

We instantiate this regime using additive bias vectors in decoder MLP layers.
For each unlabeled problem, the model generates multiple rollouts whose majority
answer defines a pseudo-label reward; GRPO then updates only the bias vectors,
while all pretrained model parameters remain frozen.
After optimization, the learned biases form a small fixed intervention that is
applied directly during inference.
Beyond aggregate accuracy, we evaluate held-out transfer, compare trainable
subspaces under matched parameter budgets, analyze reward reliability and
subspace trainability, and audit multimodal improvements for task-specific
reward and evaluation confounds.

\textbf{Contributions.}
We study whether self-generated test-time learning can remain effective under an
extreme optimization bottleneck, with a frozen 7B-scale backbone and only
$\sim$100K trainable parameters. We find that substantial adaptation survives
this restriction across text, vision-language, and audio reasoning, and that the
learned intervention transfers to 4,500 verified-disjoint MATH problems without
further optimization. Under matched parameter budgets, different trainable
subspaces exhibit markedly different behavior, showing that parameter count
alone does not determine trainability. We further provide theoretical and
empirical evidence that restricted adaptation depends jointly on the reliability
of the self-generated reward and the optimization signal accessible within the
trainable subspace.
\section{Related Work}
\label{sec:related}

\paragraph{Self-supervised and test-time reinforcement learning.}
A growing line of work trains language models on their own generations without ground-truth labels.
Self-consistency \citep{wang2022selfconsistency} first showed that sampling many reasoning paths and marginalizing to the most frequent answer improves chain-of-thought accuracy at inference time; TTRL \citep{zuo2025ttrl} turns this majority-vote signal into a \emph{training} reward, updating a model's full parameter set against pseudo-labels computed on the unlabeled evaluation set itself ($\approx$211\% relative pass@1 improvement on AIME 2024 for Qwen2.5-Math-7B), with no human annotation.
Earlier bootstrapping approaches reach a related destination differently: STaR \citep{zelikman2022star} fine-tunes on model-generated rationales filtered by whether they reach a known-correct answer, ReST \citep{gulcehre2023rest} alternates between generating a training set from the current policy and fitting to it via offline RL, and self-rewarding language models \citep{yuan2024selfrewarding} use the model itself as an LLM-as-a-judge to build preference pairs for iterative DPO \citep{pmlr-v235-xiong24a}.
All of these, including TTRL \citep{zuo2025ttrl}, update every model parameter.
We adopt TTRL's majority-vote reward unchanged, optimized with group-relative policy optimization \citep[GRPO;][]{shao2024deepseekmath}, and ask a question this literature has not addressed: does the self-consistency signal remain effective when only a $\sim$100K-parameter bias vector is trainable, and does it generalize across different modalities.

\paragraph{Parameter-efficient fine-tuning and activation-space steering.}
A separate line of work asks how much of a model's behavior can be changed by training only a small fraction of its parameters, or none at all.
Adapter-based methods such as prefix-tuning \citep{li2021prefixtuning} and low-rank adaptation \citep[LoRA;][]{hu2022lora} insert or reparameterize a small number of trainable parameters into an otherwise frozen network (we compare directly against a LoRA parameterization of our own pipeline in \S\ref{sec:rq1}).
Activation-space steering goes further, adding a fixed vector to a model's residual stream at inference time with no gradient-based training at all \citep{turner2023activation}, or extracting that vector from the difference between contrastive positive and negative example activations \citep{rimsky2024steering}; representation engineering \citep{zou2023repe} frames this family as population-level manipulation of high-level concepts rather than individual neurons or circuits.
\citet{sinii2025steering} sit at the intersection of these two lines: rather than a training-free or difference-of-means vector, they \emph{train} one additive bias term per decoder layer with RL against a \emph{labeled, verifiable} reward (RLOO\citep{ahmadian2024back} on DeepScaleR) and match full RL fine-tuning at a fraction of a percent of the model's parameters.
This is our primary comparison point throughout the paper: we use the same bias-only mechanism and a similar reward-optimization setup (GRPO in place of RLOO \citep{ahmadian2024back}), but replace the labeled reward with TTRL's unlabeled majority-vote signal, and extend the comparison from text to vision and audio.

\paragraph{Reinforcement learning for multimodal reasoning.}
Post-training multimodal models with RL rather than supervised fine-tuning is active outside the label-free setting studied here.
Visual-RFT \citep{liu2025visualrft} applies verifiable, rule-based rewards (IoU for detection, exact-match for classification) with GRPO-style optimization, showing large gains in low-data regimes.
Closer to our setting, MM-UPT \citep{mmupt2025} explores unsupervised, self-rewarding post-training for multimodal LLM reasoning on MathVista-style tasks \citep{lu2024mathvista}, but fine-tunes the full model rather than an additive bias term.
Both update substantially more parameters than the $\sim$100K-parameter vectors studied here, and neither combines multimodal post-training with a fully label-free, test-time reward signal; our MathVista \citep{lu2024mathvista} and MMAU \citep{mmau2024} comparisons (\S\ref{sec:rq3}) are explored under our proposed method.

\section{Preliminaries and Problem Setup}
\label{sec:prelim}

\subsection{Problem Setup}
\label{sec:problemsetup}

We are given a pretrained model $\pi_\theta$ and an unlabeled reasoning dataset $\mathcal{D}=\{x_i\}_{i=1}^N$; no ground-truth answer is available for any $x_i$.
Rather than updating the full parameter set $\theta$, as standard RL fine-tuning would, our goal is to learn a small set of additive bias parameters $\phi$, one vector per targeted decoder layer, that steer the frozen model's behavior on $\mathcal{D}$; $\theta$ itself is never modified.
This design separates the two costs of adapting a pretrained model discussed in \S\ref{sec:introduction}: $\phi$ is orders of magnitude smaller than $\theta$, addressing parameter cost, and the reward that trains $\phi$ is derived entirely from the model's own rollouts on $\mathcal{D}$, addressing supervision cost.
The two components are independent, but combining them lets the same $\sim$100K-parameter approach apply unchanged across text, vision-language, and audio (\S\ref{sec:rq3}), where labeled, verifiable rewards are least available.
Two prior-work mechanisms supply this reward and this restriction; \S\ref{sec:pipeline} describes how we combine them.

\paragraph{TTRL.}
TTRL~\citep{zuo2025ttrl} turns majority-vote self-consistency~\citep{wang2022selfconsistency} into a \emph{training} reward.
For each problem $x_i$, $G$ rollouts $y_i^{(1)},\ldots,y_i^{(G)}\sim\pi_{\theta,\phi}(\cdot\mid x_i)$ are sampled from the current policy and the majority answer across the group, $\hat a_i = \mathrm{majority\_vote}\big(\{\mathrm{ans}(y_i^{(g)})\}_{g=1}^{G}\big)$, is taken as a pseudo-label; no ground truth is used anywhere in this loop, including for problems the model gets wrong, since an incorrect-but-self-consistent group still yields a pseudo-label.
Each rollout is then scored against $\hat a_i$,
\begin{equation}
r_i^{(g)} \;=\;
\begin{cases}
+1 & \text{if } \mathrm{ans}\big(y_i^{(g)}\big) = \hat a_i \\
-1 & \text{otherwise,}
\end{cases}
\label{eq:reward}
\end{equation}

\paragraph{GRPO.}
GRPO~\citep{shao2024deepseekmath} converts a group of per-rollout rewards into a group-normalized advantage, avoiding the need for a learned value function,
\begin{equation}
A_i^{(g)} \;=\; \frac{r_i^{(g)} - \mathrm{mean}\big(\{r_i^{(g')}\}_{g'=1}^{G}\big)} {\mathrm{std}\big(\{r_i^{(g')}\}_{g'=1}^{G}\big) + \epsilon},
\label{eq:advantage}
\end{equation}
applied identically at every token of the rollout.
We adopt TTRL's reward and GRPO's advantage unchanged as the reward source and optimizer for the pipeline in \S\ref{sec:pipeline}.

\paragraph{Bias-only steering.}
Bias-only steering~\citep{sinii2025steering} trains one additive bias term per decoder layer with RL against a \emph{labeled} reward (\S\ref{sec:related}).
At every targeted layer $l$, the bias vector $b_l\in\mathbb{R}^d$ is added directly to that layer's output; this modifies the layer's MLP block,
\begin{equation}
\mathrm{FFN}_l(h_l) \;=\; W_{\mathrm{down},l}\, \sigma\!\left(W_{\mathrm{up},l}\,h_l\right) \;+\; b_l,
\label{eq:steer}
\end{equation}
where $h_l$ is layer $l$'s input hidden state, $\sigma$ the MLP activation, and $W_{\mathrm{down},l}$, $W_{\mathrm{up},l}$ the frozen, pretrained down- and up-projection weight matrices; $b_l$ is the only quantity ever updated, and every other parameter, including all attention, embedding, and normalization weights, is fixed at its pretrained value.

\section{Method}
\label{sec:setup}

\subsection{Label-Free Bias-Only TTRL}
\label{sec:pipeline}
\label{sec:steering}

\begin{figure*}[t]
\centering
\includegraphics[width=0.94\linewidth]{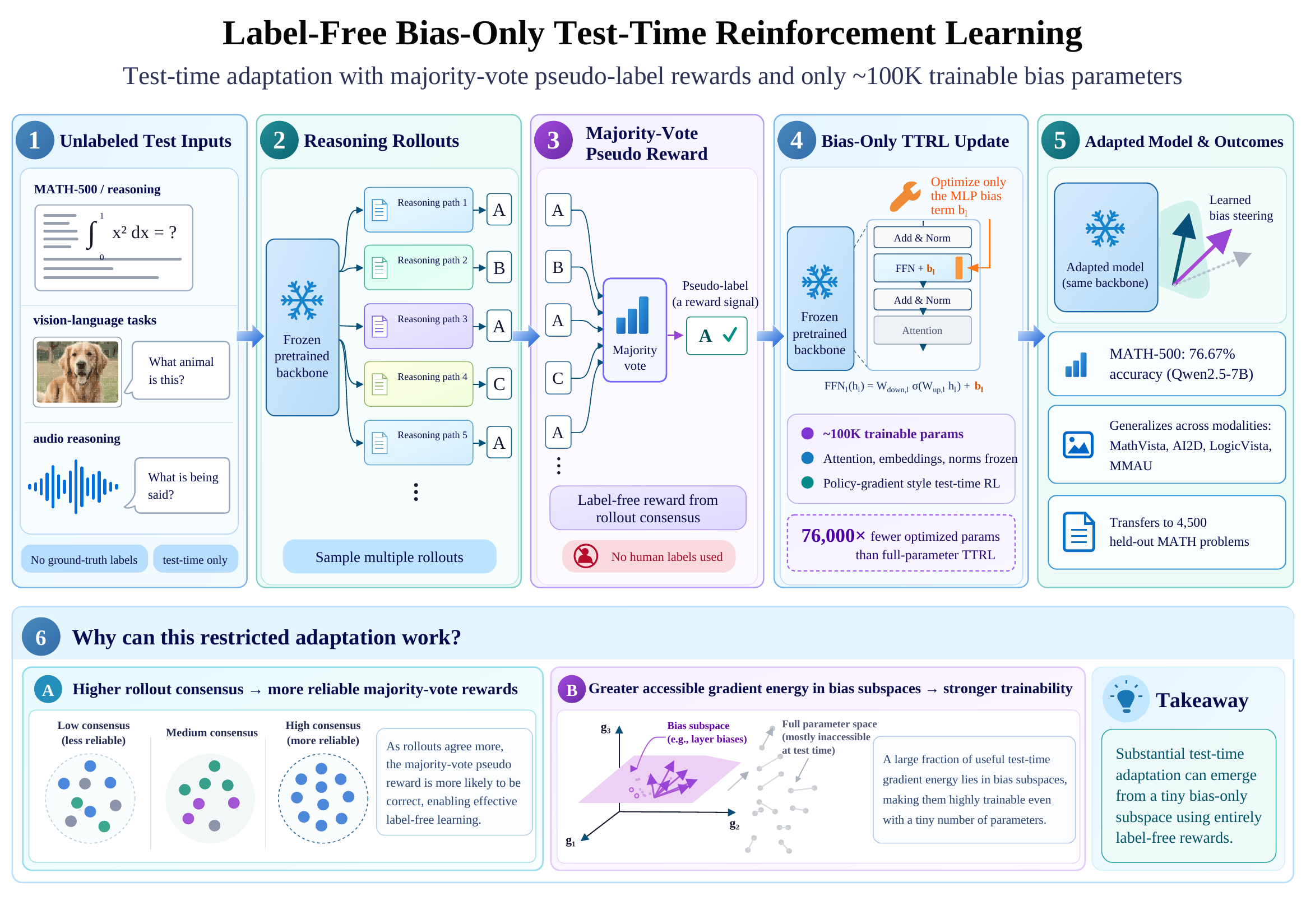}
\caption{Label-free bias-only TTRL. For unlabeled test inputs (1), the frozen backbone generates multiple rollouts (2) whose majority-vote answer serves as a pseudo-label reward (3); we train only a $\sim$100K-parameter set of bias vectors $\phi$ against this reward, with no labeled data and no update to the backbone $\theta$ (4, formalized as Algorithm~\ref{alg:ttrl}).
The identical procedure, changing only the prompt and reward/grading function, is validated on text (\S\ref{sec:rq1}) and transfers unchanged to vision-language and audio reasoning (5, \S\ref{sec:rq3}); panel 6 previews the two mechanisms analyzed in \S\ref{sec:theory}.}
\label{fig:overview}
\end{figure*}

Figure~\ref{fig:overview} (panels 2--4) gives the loop's visual overview; Algorithm~\ref{alg:ttrl} in Appendix~\ref{app:repro} states it formally. For each problem $x_i$, we sample $G$ rollouts and compute the pseudo-label reward (Eq.~\ref{eq:reward}) and GRPO advantage (Eq.~\ref{eq:advantage}) exactly as in \S\ref{sec:prelim}, restricted throughout to the bias-only subspace $\phi$: gradients computed from $A_i^{(g)}$ are back-propagated only into $\phi$, while $\theta$ receives no gradient and is held frozen.
This combination -- TTRL's label-free majority-vote reward, optimized with GRPO, restricted to the bias-only subspace $\phi$ -- is what the rest of this paper studies.

The learned bias vectors $b_l$ (Eq.~\ref{eq:steer}) are added to each targeted layer's output identically during training rollouts and evaluation; after training, $\phi$ is a single frozen artifact.
We use a simple additive offset rather than a multiplicative or low-rank update because it is the smallest possible per-layer intervention with a direct mechanistic reading ($b_l$ shifts the layer's output distribution by a constant vector regardless of $h_l$); it is also the mechanism used by \citet{sinii2025steering}, which keeps our label-free comparison isolated to the reward signal.

\subsection{Theoretical Grounding}
\label{sec:theory}

Two theoretical claims motivate this design, connecting the majority-vote reward and the restricted bias subspace (\S\ref{sec:pipeline}) to why the resulting optimization signal should be both reliable and usable.
We state simplified propositions that capture the qualitative mechanisms studied experimentally; full statements and proof sketches are provided in Appendix~\ref{app:theory-extra}.
\begin{proposition}[Pseudo-Label Reliability]
\label{prop:pseudo_label_reliability}
For an unlabeled input $x$, let $p(a\mid x)$ denote the model's
answer distribution over $K$ possible answers, and let $a^\star$
denote the correct answer. Let $\hat a_G$ be the majority-vote answer
obtained from $G$ i.i.d.\ rollouts, and define the answer margin as
\[
\Delta(x)
\triangleq
p(a^\star\mid x)
-
\max_{a\neq a^\star} p(a\mid x).
\]
If $\Delta(x)>0$, then the probability that the majority-vote
pseudo-label is incorrect is bounded by
\begin{equation}
\Pr\!\left[\hat a_G \neq a^\star\right]
\le
(K-1)
\exp\!\left(
-\frac{G\Delta(x)^2}{2}
\right).
\label{eq:prop1}
\end{equation}
\end{proposition}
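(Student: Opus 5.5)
The plan is a union bound over competing answers combined with Hoeffding's inequality applied to pairwise vote differences. Write $N_a = \sum_{g=1}^{G} \mathbf{1}[\mathrm{ans}(y^{(g)}) = a]$ for the vote count of answer $a$. The majority vote can return a wrong answer only if some competitor $a \neq a^\star$ satisfies $N_a \ge N_{a^\star}$; ties count as failure here, which is the conservative choice. This gives
\[
\Pr[\hat a_G \neq a^\star] \le \sum_{a \neq a^\star} \Pr[N_a - N_{a^\star} \ge 0],
\]
and the sum has $K-1$ terms, which is where the prefactor in Eq.~\eqref{eq:prop1} comes from.

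For a fixed competitor $a$, define per-rollout variables $Z_g = \mathbf{1}[\mathrm{ans}(y^{(g)}) = a] - \mathbf{1}[\mathrm{ans}(y^{(g)}) = a^\star]$. These are i.i.d. across rollouts, take values in $\{-1, 0, 1\} \subset [-1,1]$, and have mean
\[
\mathbb{E} Z_g = p(a \mid x) - p(a^\star \mid x) \le -\Delta(x) < 0,
\]
since $p(a \mid x) \le \max_{a' \neq a^\star} p(a' \mid x)$. The failure event for $a$ is $\sum_g Z_g \ge 0$, which is a deviation of at least $G\Delta(x)$ above the mean. Hoeffding's inequality with range width $2$ then gives
\[
\Pr\Big[\sum_g (Z_g - \mathbb{E} Z_g) \ge G\Delta(x)\Big] \le \exp\!\big(-2G^2\Delta(x)^2 / (G \cdot 4)\big) = \exp(-G\Delta(x)^2/2).
\]
Summing over the $K-1$ competitors yields Eq.~\eqref{eq:prop1} exactly.

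I do not expect a real obstacle; the main care points are bookkeeping. First, the Hoeffding constant must use the range $[-1,1]$ of width $2$. This is what produces the factor $1/2$ in the exponent; using the range of the individual indicators would wrongly give a different constant. Second, ties must be handled consistently, and treating $N_a = N_{a^\star}$ as failure makes the bound valid under any tie-breaking rule. Third, independence of the $Z_g$ follows directly from the i.i.d. rollout assumption in the statement, since each $Z_g$ is a function of a single rollout.

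If $K$ is large or infinite, one could refine the bound by grouping low-probability answers. That is unnecessary for the stated bound, which is finite whenever $K$ is finite.
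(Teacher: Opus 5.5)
Your proposal is correct and follows essentially the same route as the paper's proof: per-competitor difference variables $Z_g\in[-1,1]$ (the paper uses the opposite sign), Hoeffding's inequality with range width $2$ giving $\exp(-G\Delta(x)^2/2)$, ties counted as failures, and a union bound over the $K-1$ incorrect answers. Your explicit derivation of the Hoeffding constant, $\exp\!\big(-2G^2\Delta(x)^2/(4G)\big)$, spells out a step that the paper's sketch only asserts.
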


Proposition~\ref{prop:pseudo_label_reliability} shows that, for
positive-margin problems, the pseudo-label error probability decreases
exponentially with the number of rollouts $G$. We test this qualitative
prediction empirically in \S\ref{sec:prop1} by measuring pseudo-label
accuracy as the number of rollouts increases.
\begin{theorem}[Restricted-Subspace Improvement Bound]
\label{thm:restricted_subspace}
Let $J(w)$ denote the RL objective and let
$g=\nabla J(w)$ be its gradient at $w$.
Consider adaptation restricted to a subspace $\mathcal{S}$, and let
$P_{\mathcal{S}}$ denote the orthogonal projection onto
$\mathcal{S}$. Define the accessible gradient energy as
\[
E_{\mathcal{S}}
\triangleq
\left\|P_{\mathcal{S}}g\right\|^2.
\]
Suppose that $J$ is locally $L$-smooth. For the restricted gradient
update
\[
w^+
=
w+\eta P_{\mathcal{S}}g,
\]
with $0<\eta\le 1/L$, the objective improvement is bounded below by
\begin{equation}
J(w^+)-J(w)
\ge
\frac{\eta}{2}
\left\|P_{\mathcal{S}}g\right\|^2
=
\frac{\eta}{2}E_{\mathcal{S}}.
\label{eq:prop2}
\end{equation}
\end{theorem}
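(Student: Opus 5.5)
The proof is the standard ascent lemma for an $L$-smooth function, specialized to the projected direction $d \triangleq P_{\mathcal{S}}g$. I read ``locally $L$-smooth'' as follows: $\nabla J$ is $L$-Lipschitz on a neighbourhood of $w$ that contains the segment $[w, w+\eta d]$. Under this reading, the quadratic lower bound
\[
J(w+\eta d) \;\ge\; J(w) + \eta\,\langle g, d\rangle - \frac{L\eta^2}{2}\,\|d\|^2
\]
holds. I will obtain it from the integral form of Taylor's theorem: write $J(w+\eta d)-J(w)=\int_0^1\langle \nabla J(w+t\eta d),\eta d\rangle\,dt$, add and subtract $\langle g,\eta d\rangle$, and bound the remainder by Cauchy--Schwarz and the Lipschitz property. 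This gives a remainder of magnitude at most $\int_0^1 L t\eta^2\|d\|^2\,dt = \tfrac{L\eta^2}{2}\|d\|^2$.

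The second step handles the inner product, using the orthogonal projection. Since $P_{\mathcal{S}}$ is symmetric and idempotent,
\[
\langle g, P_{\mathcal{S}}g\rangle = \langle g, P_{\mathcal{S}}^2 g\rangle = \langle P_{\mathcal{S}}g, P_{\mathcal{S}}g\rangle = \|P_{\mathcal{S}}g\|^2 = E_{\mathcal{S}}.
\]
This is the one place where orthogonality of the projection is used. It is also why the bound depends on accessible gradient energy rather than on $\dim\mathcal{S}$: only the component of $g$ lying in $\mathcal{S}$ contributes to first-order ascent. Substituting into the quadratic bound gives
\[
J(w^+)-J(w) \ge \eta\Bigl(1-\frac{L\eta}{2}\Bigr)E_{\mathcal{S}}.
\]

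Finally, for $0<\eta\le 1/L$ we have $1-L\eta/2\ge 1/2$, which yields \eqref{eq:prop2}. One could sharpen the argument by noting that only the smoothness constant of $J$ restricted to the affine slice $w+\mathcal{S}$ matters, but that refinement is not needed here.

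The main obstacle is interpretive rather than technical: making ``locally $L$-smooth'' precise enough that the segment from $w$ to $w^+$ stays inside the region where the Lipschitz bound holds. I would state this explicitly as an assumption, or alternatively note that it holds for sufficiently small $\eta$ by continuity. I would also remark that in the bias-only instantiation, $\mathcal{S}$ is a coordinate subspace. In that case $P_{\mathcal{S}}g$ is simply the bias-coordinate block of the full gradient, so $E_{\mathcal{S}}$ is directly measurable.
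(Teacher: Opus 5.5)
Your proposal is correct and follows essentially the same route as the paper's proof: the $L$-smoothness quadratic lower bound with $d=\eta P_{\mathcal{S}}g$, the identity $\langle g,P_{\mathcal{S}}g\rangle=\|P_{\mathcal{S}}g\|^2$ from idempotence and self-adjointness, the intermediate bound $\eta(1-\tfrac{L\eta}{2})E_{\mathcal{S}}$, and then $1-\tfrac{L\eta}{2}\ge\tfrac12$ for $\eta\le 1/L$. Your extra steps only add rigor where the paper's sketch is terse: you derive the quadratic bound via the integral form of Taylor's theorem, and you make explicit that ``locally $L$-smooth'' must cover the segment from $w$ to $w^+$.
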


Theorem~\ref{thm:restricted_subspace} shows that the guaranteed local
improvement is governed by the accessible gradient energy rather than
directly by the dimensionality of $\mathcal{S}$. Consequently, a small
subspace can still support substantial adaptation when it contains a
sufficiently aligned component of the full optimization gradient,
whereas equal-sized subspaces need not be equally trainable. We test
this qualitative prediction empirically in \S\ref{sec:rq2}.
\subsection{Implementation}
\label{sec:implementation}

Our primary configuration trains $b_l$ at all 28 MLP layers ($28\times3584{=}100{,}352$ parameters on Qwen2.5-7B/Qwen2.5-Math-7B); \S\ref{sec:rq2} separately ablates a single-layer sweep of the same pipeline, and \S\ref{sec:rq1} compares against a LoRA parameterization, with full detail on the reward-shaping, scale, and parameterization-stability ablations in Appendix~\ref{app:rq2-extra}. 
We additionally test a length-aware reward variant (ShorterBetter), which targets the shortest \emph{correct} response length within a rollout group as a dynamic target rather than a fixed-length penalty.
$\phi$ is optimized with AdamW \citep{loshchilov2017decoupled} at bfloat16 precision; gradients and optimizer state are kept only for $\phi$, so peak training memory is dominated by activations and rollout sampling rather than by optimizer-state duplication of $\theta$.
Learning rate, $G$, and schedule are swept per task as ordinary hyperparameters (\S\ref{sec:expsetup}), with the full configuration summarized in Appendix~\ref{app:repro}. 
Every run trains for a fixed budget of 200 steps.
Dataset descriptions, prompts, and evaluation protocol are explained in \S\ref{sec:expsetup}.

\section{Experimental Setup}
\label{sec:expsetup}

\textbf{Models.}
Text: Qwen2.5-7B (base) \citep{qwen25report}  and Qwen2.5-Math-7B \citep{yang2024qwen25math}.
Vision-language: Qwen2.5-VL-7B-Instruct \citep{qwen25vl2025}.
Audio: Qwen2.5-Omni-7B \citep{qwen2omni2025}.

\textbf{Tasks.}
MATH-500 (500 problem subset of \citealt{hendrycksmath2021}, no separate train split: the training and evaluation set are the same 500 problems, according to the TTRL paradigm \citep{zuo2025ttrl}); MathVista testmini \citep{lu2024mathvista}; AI2D-TEST \citep{kembhavi2016ai2d}, full 3088-problem test split; LogicVista \citep{xiao2024logicvista}, complete 448-row test split; MMAU-mini test-mini \citep{mmau2024}.

\textbf{Evaluation protocol.}
Unless otherwise noted, all of our own results use greedy decoding, pass@1, single generation per problem.
The comparison numbers cited from prior work follow each source's own published protocol.


\section{Results}

We evaluate label-free bias-only TTRL across text, vision-language, and audio reasoning.
Unless otherwise stated, bias-only TTRL results are reported as mean $\pm$ standard deviation over three independent training seeds.

\subsection{Main Results Across Reasoning Benchmarks}
\label{sec:rq1}
\label{sec:rq3}

Table~\ref{tab:main_results} summarizes the performance of label-free bias-only TTRL across text, vision-language, and audio reasoning.
The same $\sim$100K-parameter training procedure is used across all tasks and modalities -- Qwen2.5-Math-7B and Qwen2.5-7B on MATH-500, Qwen2.5-VL-7B-Instruct on AI2D, LogicVista, and MathVista, and Qwen2.5-Omni-7B on MMAU -- with only the task-specific prompt and reward/grading function adapted to each benchmark.
\newcommand{\cellstat}[3]{%
\shortstack{%
#1\\[-1pt]
{\tiny\color{blue!55!black}$\Delta$#2}\\[-1pt]
{\tiny\color{green!45!black}#3}}}

\newcommand{\cellstatbold}[3]{%
\shortstack{%
\textbf{#1}\\[-1pt]
{\tiny\color{blue!55!black}\textbf{$\Delta$#2}}\\[-1pt]
{\tiny\color{green!45!black}\textbf{#3}}}}

\newcommand{\cellstatneg}[3]{%
\shortstack{%
#1\\[-1pt]
{\tiny\color{blue!55!black}$\Delta$#2}\\[-1pt]
{\tiny\color{red!75!black}#3}}}

\begin{table*}[t]
\centering
\scriptsize
\setlength{\tabcolsep}{4.0pt}
\renewcommand{\arraystretch}{1.02}

\resizebox{\textwidth}{!}{%
\begin{tabular}{l|l|c|c|c|c|c|c|>{\columncolor{blue!10}}c}
\toprule

Model &
Benchmark &
Baseline &
\shortstack{Self-\\consistency} &
\shortstack{LoRA\\(exact-match)} &
LoRA $r{=}16$ &
LoRA $r{=}64$ &
FullFT &
\cellcolor{blue!10}\textbf{Bias-only TTRL (ours)}
\\

& & &
\multicolumn{6}{c}{
\tiny\itshape
score /
\textcolor{blue!55!black}{$\Delta$ vs.\ baseline} /
\textcolor{green!45!black}{relative gain}
}
\\

\cmidrule{1-9}

Trainable params &
&
0 &
0 &
100K &
5.0M &
20.2M &
$\sim$7.6B &
\cellcolor{blue!10}\textbf{100K}
\\

\midrule

Qwen2.5-Math-7B &
MATH-500 &
56.0 &
\cellstat{65.53 $\pm$ 1.27}{+9.53}{$\uparrow$17.0\%} &
\cellstatbold{78.80 $\pm$ 1.25}{+22.80}{$\uparrow$40.7\%} &
\cellstat{77.70 $\pm$ 0.10}{+21.70}{$\uparrow$38.8\%} &
\cellstat{75.73 $\pm$ 2.42}{+19.73}{$\uparrow$35.2\%} &
\cellstat{83.80 $\pm$ 0.28}{+27.80}{$\uparrow$49.6\%} &
\cellstatbold{79.50 $\pm$ 0.70}{+23.50}{$\uparrow$42.0\%}
\\

\midrule

Qwen2.5-7B &
MATH-500 &
45.9 &
\cellstat{73.40 $\pm$ 0.69}{+27.50}{$\uparrow$59.9\%} &
\cellstat{69.27 $\pm$ 2.81}{+23.37}{$\uparrow$50.9\%} &
\cellstat{73.93 $\pm$ 0.82}{+28.03}{$\uparrow$61.1\%} &
\cellstatbold{75.60 $\pm$ 0.60}{+29.70}{$\uparrow$64.7\%} &
\cellstat{77.00 $\pm$ 0.53}{+31.10}{$\uparrow$67.8\%} &
\cellstatbold{76.67 $\pm$ 1.57}{+30.77}{$\uparrow$67.0\%}
\\

\midrule

Qwen2.5-VL-7B-Instruct &
AI2D &
73.58 &
\cellstat{75.95 $\pm$ 0.45}{+2.37}{$\uparrow$3.2\%} &
\cellstat{75.26 $\pm$ 0.51}{+1.68}{$\uparrow$2.3\%} &
\cellstat{77.60 $\pm$ 0.80}{+4.02}{$\uparrow$5.5\%} &
\cellstatbold{79.15 $\pm$ 0.17}{+5.57}{$\uparrow$7.6\%} &
\cellstat{80.18 $\pm$ 0.28}{+6.60}{$\uparrow$9.0\%} &
\cellstatbold{78.29 $\pm$ 0.03}{+4.71}{$\uparrow$6.4\%}
\\

\midrule

Qwen2.5-VL-7B-Instruct &
LogicVista &
37.50 &
\cellstat{40.85 $\pm$ 0.89}{+3.35}{$\uparrow$8.9\%} &
\cellstatneg{35.86 $\pm$ 0.68}{-1.64}{$\downarrow$4.4\%} &
\cellstat{40.60 $\pm$ 1.60}{+3.10}{$\uparrow$8.3\%} &
\cellstatbold{40.70 $\pm$ 1.91}{+3.20}{$\uparrow$8.5\%} &
\cellstat{46.13 $\pm$ 0.38}{+8.63}{$\uparrow$23.0\%} &
\cellstatbold{43.38 $\pm$ 1.01}{+5.88}{$\uparrow$15.7\%}
\\

\midrule

Qwen2.5-VL-7B-Instruct &
MathVista &
61.60 &
\cellstat{64.73 $\pm$ 1.10}{+3.13}{$\uparrow$5.1\%} &
\cellstatbold{63.77 $\pm$ 0.25}{+2.17}{$\uparrow$3.5\%} &
\cellstat{63.50 $\pm$ 0.90}{+1.90}{$\uparrow$3.1\%} &
\cellstat{63.20 $\pm$ 0.88}{+1.60}{$\uparrow$2.6\%} &
\cellstat{62.57 $\pm$ 0.40}{+0.97}{$\uparrow$1.6\%} &
\cellstatbold{65.40 $\pm$ 1.77}{+3.80}{$\uparrow$6.2\%}
\\

\midrule

Qwen2.5-Omni-7B &
MMAU &
57.40 &
\cellstat{60.53 $\pm$ 0.47}{+3.13}{$\uparrow$5.5\%} &
\cellstat{58.07 $\pm$ 0.25}{+0.67}{$\uparrow$1.2\%} &
\cellstatbold{60.10 $\pm$ 1.00}{+2.70}{$\uparrow$4.7\%} &
\cellstat{59.57 $\pm$ 1.52}{+2.17}{$\uparrow$3.8\%} &
\cellstat{61.90 $\pm$ 1.64}{+4.50}{$\uparrow$7.8\%} &
\cellstatbold{60.87 $\pm$ 0.37}{+3.47}{$\uparrow$6.0\%}
\\

\bottomrule
\end{tabular}%
}

\caption{
Main results across text, vision-language, and audio reasoning.
Each trained cell reports the score (mean $\pm$ std over three seeds),
the absolute improvement over the corresponding untrained baseline
(\textcolor{blue!55!black}{$\Delta$}), and the relative gain
(\textcolor{green!45!black}{$\uparrow$}).
Red denotes degradation.
The best LoRA configuration for each benchmark is bolded separately,
while \textbf{Bias-only TTRL (ours)} is highlighted in blue.
Self-consistency performs majority voting over $G$ samples without parameter updates.
}
\label{tab:main_results}
\end{table*}
Bias-only TTRL improves over the corresponding untrained model on all six benchmarks, reaching $79.50\%$ on MATH-500 with Qwen2.5-Math-7B and $76.67\%$ with Qwen2.5-7B, with gains of $+3.8$ to $+5.9$ points on the vision-language benchmarks and $+3.5$ on audio (per-benchmark values in Table~\ref{tab:main_results}).
The benefit of label-free bias-only adaptation is therefore not restricted to language-only mathematical reasoning, but extends across substantially different reasoning modalities.

The LoRA comparisons in Table~\ref{tab:main_results} isolate the effect of the trainable parameterization while keeping the label-free TTRL procedure fixed.
To control for trainable capacity specifically, we additionally compare against LoRA (exact-match), an $r{=}1$ adapter restricted to \texttt{q\_proj} on 14 layers that trains exactly 100K parameters -- identical to bias-only TTRL.
Bias-only TTRL outperforms both LoRA (exact-match) and LoRA $r{=}16$ on all six benchmarks, despite LoRA $r{=}16$ training roughly $50\times$ more parameters; LoRA (exact-match) even falls below the untrained baseline on LogicVista ($35.86\%$ vs.\ $37.50\%$).
Bias-only TTRL also outperforms LoRA $r{=}64$ on five of six benchmarks, losing only on AI2D.
Only FullFT -- which trains the entire 7.6B-parameter model -- exceeds bias-only TTRL on more than one benchmark, doing so on five of six (all but MathVista, where bias-only TTRL is best overall, ahead of FullFT itself).
Bias-only adaptation is thus not uniformly superior to LoRA, but label-free test-time RL clearly does not require a large trainable space.

Table~\ref{tab:main_results} also reports a self-consistency baseline --- majority-vote over $G$ samples at inference time, with no training at all --- isolating how much of the gain is ``free'' inference-time voting rather than a learned improvement.
Self-consistency alone already recovers a substantial part of the gain on MATH-500 (e.g.\ $65.53\%\pm1.27$ vs.\ bias-only TTRL's $79.50\%\pm0.70$ on Qwen2.5-Math-7B), but bias-only TTRL still improves further over it on every benchmark, indicating that training on the majority-vote signal adds more than voting alone.

\subsection{Ablations with Labels and Parameterization}
\label{sec:comparison-prior}

To isolate the effect of the label source from the effect of the trainable parameterization, we run a full $2\times2$ ablation crossing label source (majority-vote pseudo-label vs.\ ground-truth labels) with trainable parameterization (bias-only vs.\ full fine-tuning) on MATH-500, holding the rest of the training recipe fixed within each parameterization.
Table~\ref{tab:mainmath} summarizes all four cells, each our own reproduction; Appendices~\ref{app:labelcompare-extra}--\ref{app:rq4-extra} additionally compare against an externally trained labeled vector.

\begin{table}[h]
\centering
\scriptsize
\setlength{\tabcolsep}{4.2pt}
\renewcommand{\arraystretch}{1.05}

\begin{tabular}{lcccc}
\toprule
Method
& \shortstack{Labeled\\reward?}
& \shortstack{Trainable\\params}
& Qwen2.5-7B
& Qwen2.5-Math-7B \\
\midrule

\rowcolor{blue!8}
\textbf{Bias-only, pseudo-label (ours)}
& \textbf{\ding{55}}
& \textbf{100K}
& \textbf{76.67 $\pm$ 1.57}
& \textbf{79.50 $\pm$ 0.70} \\

Bias-only, labeled
& \checkmark
& 100K
& 75.47 $\pm$ 1.21
& 77.27 $\pm$ 0.70 \\

\midrule

Full fine-tune, pseudo-label
& \ding{55}
& 7.6B
& 77.00 $\pm$ 0.53
& 83.80 $\pm$ 0.28 \\

Full fine-tune, labeled
& \checkmark
& 7.6B
& \textbf{87.93 $\pm$ 0.99}
& \textbf{85.13 $\pm$ 1.33} \\

\bottomrule
\end{tabular}

\caption{
$2\times2$ ablation on MATH-500 crossing reward supervision
(pseudo-label vs.\ ground truth) with parameterization
(bias-only vs.\ full fine-tuning).
All four settings are our own reproductions under a matched recipe within each
parameterization. Results are mean $\pm$ standard deviation over three seeds.
}
\label{tab:mainmath}
\end{table}
On Qwen2.5-7B, pseudo-label bias-only TTRL reaches $76.67\%\pm1.57$, slightly above its labeled counterpart ($75.47\%\pm1.21$, a $1.2$-point gap); full fine-tuning shows the opposite pattern, with labeled full fine-tuning ($87.93\%\pm0.99$) $10.9$ points above pseudo-label full fine-tuning ($77.00\%\pm0.53$).
On Qwen2.5-Math-7B the same pattern holds but is smaller for full fine-tuning: pseudo-label bias-only ($79.50\%\pm0.70$) again slightly exceeds labeled bias-only ($77.27\%\pm0.70$, a $2.2$-point gap), while labeled full fine-tuning ($85.13\%\pm1.33$) exceeds pseudo-label full fine-tuning ($83.80\%\pm0.28$) by only $1.3$ points.

This asymmetry is consistent with Qwen2.5-Math-7B's majority-vote pseudo-labels already being close to ground truth on math problems, while Qwen2.5-7B's pseudo-labels are noisier: removing labels costs full fine-tuning little on Math-7B but much more on the base model, while bias-only adaptation is not costed by removing labels in either model.
Under pseudo-labels, bias-only adaptation stays within $0.3$--$4.3$ points of full fine-tuning while updating approximately $76{,}000\times$ fewer parameters.

Training compute differs sharply between parameterizations: bias-only training uses a single GH200 GPU ($4.45$--$4.95$ GPU-hours/run), while full fine-tuning needs four GH200 GPUs under FSDP ($79.24$--$82.12$ GPU-hours/run), a $16.0$--$18.5\times$ reduction in GPU-hours on top of the parameter reduction.
Pseudo-label and labeled variants share an identical training recipe per parameterization; the reward source only changes a negligible-cost computation over already-generated rollouts, so their compute is expected to match.
\subsection{Generalization to Held-Out MATH Problems}
\label{sec:rq1-transfer}

To test whether the learned bias vectors generalize beyond the problems used for
test-time optimization, we freeze the step-200 checkpoints trained on MATH-500
and evaluate them, without further training, on 4,500 verified-disjoint MATH
problems.

\begin{table}[h]
\centering
\scriptsize
\setlength{\tabcolsep}{7pt}
\renewcommand{\arraystretch}{1.08}
\begin{tabular}{lccc}
\toprule
Model & Unadapted & Transferred bias & $\Delta$ \\
\midrule
Qwen2.5-7B
& 46.3
& \textbf{70.9}
& \textcolor{green!45!black}{\textbf{+24.6}} \\

Qwen2.5-Math-7B
& 52.5
& \textbf{75.4}
& \textcolor{green!45!black}{\textbf{+22.9}} \\
\bottomrule
\end{tabular}
\caption{
Transfer to 4,500 verified-disjoint MATH problems.
Bias vectors are trained on MATH-500 and applied without further optimization.
Values are accuracy (\%); $\Delta$ is the absolute gain in percentage points.
}
\label{tab:heldout_transfer}
\end{table}

The transferred bias vectors improve Qwen2.5-7B by $24.6$ points and
Qwen2.5-Math-7B by $22.9$ points (Table~\ref{tab:heldout_transfer}). Since none of the held-out problems is used
during optimization, the gains indicate that the learned intervention transfers
beyond the fixed MATH-500 problem set.
\subsection{Majority-Vote Pseudo-Label Accuracy}
\label{sec:prop1}

Proposition~1 (\S\ref{sec:theory}) predicts that majority-vote pseudo-label failure decays exponentially in the number of rollouts $G$ for positive-margin problems.

As shown in Figure~\ref{fig:theory_validation}(a,b), we test this prediction on an $N{=}64$ batch of MATH-500 problems: $G_{\max}{=}64$ rollouts are generated once per problem, majority vote is computed as a prefix for each $G$, and the empirical consensus proxy $m=(N_1-N_2)/64$ is recorded alongside accuracy.


The ranking of pseudo-label accuracy tracks both rollout count and consensus strength, giving empirical support for the qualitative prediction of Proposition~1.

\subsection{Restricted-Subspace Trainability}
\label{sec:rq2}

We next examine why restricting optimization to approximately 100K bias parameters can still produce substantial adaptation.
Section~\ref{sec:theory} predicts that equal-sized bias subspaces need not be equally trainable: what matters is the amount of useful RL gradient lying within the accessible subspace.

As shown in Figure~\ref{fig:theory_validation}(c), we test this prediction using nine single-layer bias subspaces spanning early, middle, and late layers of the network, each containing exactly 3,584 parameters.
For each layer, we compute the accessible gradient energy $E_l=\|P_l g\|^2$ from a shared backward pass on a fixed 16-problem batch and compare it with the final-step accuracy obtained after training that single-layer subspace.
\begin{figure*}[h]
    \centering
    \includegraphics[width=\textwidth]{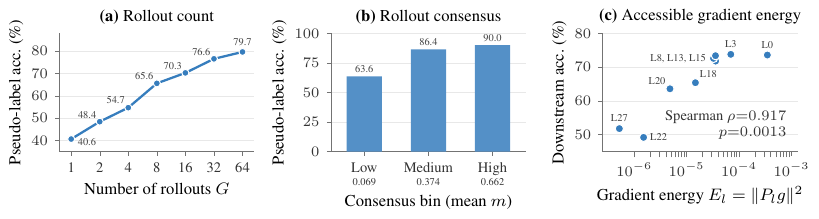}
    \caption{
    Empirical support for Proposition~1 and Theorem~1.
    \textbf{(a)} Majority-vote pseudo-label accuracy increases monotonically with the number of rollouts $G$.
    \textbf{(b)} Higher rollout consensus corresponds to higher pseudo-label accuracy.
    \textbf{(c)} Accessible gradient energy is strongly associated with downstream training accuracy across nine single-layer bias subspaces
    (Spearman $\rho=0.917$, exact two-tailed permutation test, $p=0.0013$, $n=9$).
    Together, these results are consistent with the qualitative predictions that stronger rollout agreement yields more reliable pseudo-labels and that restricted subspaces with greater accessible gradient energy are more trainable.
    }
    \label{fig:theory_validation}
\end{figure*}

The ranking of accessible gradient energy is consistent with downstream trainability, giving a Spearman correlation of $\rho=0.917$ between accessible gradient energy and training accuracy across all nine layers (exact two-tailed permutation test, $p=0.0013$).
Thus, equal parameter counts do not imply equal trainability: layers whose bias directions contain more accessible optimization signal are more effective.
This provides empirical support for the qualitative prediction of Theorem~1.

\subsection{Analysis of Multimodal Improvements}
\label{sec:rq3-audit}

Because test-time learning can potentially exploit task-specific properties of the reward or evaluation procedure, we audit the multimodal gains for possible confounds.
We manually inspect improved and degraded examples and then perform exhaustive checks for the patterns identified by the manual analysis (full methodology in Appendix~\ref{app:qual-extra}).
On AI2D, training substantially reduces the official parser's failure-to-extract rate, indicating that part of the improvement comes from better output-format compliance rather than only changes in answer content.
For LogicVista, scored with the same chain-of-thought prompt and grader for baseline and steered models, an exhaustive check of all 448 problems finds 77 flipped from wrong to correct and 47 from correct to wrong, with no single dominant confound.
For MathVista, the steered model answers exactly \texttt{0} on all 49 age-gap questions, a narrow case that contributes only one net problem to the gain.
For MMAU, a single phoneme-counting template (15.3\% of the set) accounts for roughly two-thirds of the improvement: $+15.1$ points on that template versus $+1.4$ on the remaining 847 problems, which is within seed variation (Table~\ref{tab:mmauconfound}); 

Overall, the audits find no single dominant confound explaining the improvements across the multimodal benchmarks, while identifying narrow task-specific cases where the label-free reward can influence the learned behavior; per-case examples are in Table~\ref{tab:qualexamples}.

\FloatBarrier
\section{Conclusion}

We introduced label-free bias-only TTRL, which optimizes only $\sim$100K bias parameters using majority-vote pseudo-label rewards while keeping the pretrained backbone frozen.
On MATH-500, it reaches 79.50\% with Qwen2.5-Math-7B and 76.67\% with Qwen2.5-7B, slightly outperforming matched labeled bias-only adaptation.
The same procedure improves vision-language and audio reasoning and transfers to 4,500 held-out MATH problems.
Our analyses link successful adaptation to reliable majority-vote rewards and accessible gradient energy, showing that effective test-time adaptation can emerge from a tiny trainable subspace without ground-truth labels or full-model updates.
%

\section*{Acknowledgments}

The authors wish to acknowledge CSC -- IT Center for Science, Finland, for computational resources, including access to the Roihu and LUMI supercomputers.
This work was supported by HAIF, co-funded by the European Union's Horizon Europe research and innovation programme's Marie Sk\l{}odowska-Curie Action.

\section*{Statement on the Use of Generative AI}

In preparing this manuscript we used a generative AI assistant for paper-preparation
support: \LaTeX{} formatting and table construction, drawing figures, condensing and
copy-editing prose that we had written, and cross-checking numerical consistency
between the tables, the figures, and the surrounding text.
Every number reported in this paper comes from our own experimental runs; the
assistant was given those numbers in order to typeset and cross-check them, and was
not used to produce, estimate, or extrapolate any result.

We did not use generative AI tools to generate synthetic data or to write any part of
the training or evaluation pipeline whose outputs are reported here.
The theoretical statements and proof sketches in \S\ref{sec:theory} and
Appendix~\ref{app:theory-proofs} are our own.

All AI-assisted edits were checked by the authors against the underlying experimental
logs before inclusion, and the final manuscript was read in full by the authors, who
take responsibility for its entire contents.

\section*{Ethics Statement}

This work fine-tunes publicly released model checkpoints on publicly released reasoning
benchmarks (MATH-500, AI2D, LogicVista, MathVista, and MMAU-mini) and introduces no new
data collection.
No human subjects were involved and no personally identifying information was used.
Because our method adapts a model using rewards derived from the model's own outputs
rather than from verified labels, it can in principle reinforce a model's existing
errors rather than correct them; \S\ref{sec:rq3-audit} and Appendix~\ref{app:qual-extra}
report the cases in which we observed this, and we consider auditing for such effects a
necessary part of deploying label-free test-time adaptation rather than an optional
check.

\section*{Reproducibility Statement}

The full training and evaluation configuration for every reported result, trainable
parameter set, optimizer, rollout count, learning rate and schedule, step budget,
decoding settings, and checkpoint-selection rule is given in
Appendix~\ref{app:repro}.
All of our own results are reported as mean $\pm$ standard deviation over three
independent training seeds under a fixed 200-step budget, with the final checkpoint
reported in every case.
The context lengths and decoding protocols underlying each row of
Table~\ref{tab:main_results} are listed in Appendix~\ref{app:repro}, and the additional
reward-shaping, scale, and parameterization ablations are specified in
Appendix~\ref{app:rq2-extra}.

\bibliography{references}

\appendix
\input{appendix}

\end{document}

%% file: appendix.tex
\section{Reproducibility and Configuration Details}
\label{app:repro}

Algorithm~\ref{alg:ttrl} states the training loop described in \S\ref{sec:pipeline} formally; Table~\ref{tab:repro} then gives the full configuration behind every reported result.

\begin{algorithm}[h]
\centering
\begin{tcolorbox}[colback=gray!3,colframe=black!55,boxrule=0.4pt,
left=5pt,right=5pt,top=2pt,bottom=2pt,width=0.82\linewidth]
\scriptsize
\textbf{Require:} frozen backbone $\pi_\theta$, unlabeled dataset $\mathcal{D}$, rollouts $G$, learning rate $\eta$, steps $T$

\vspace{1pt}
\begin{tabbing}
\hspace{1.0em}\=\hspace{1.0em}\=\hspace{1.0em}\=\hspace{1.0em}\=\kill 1:\ \>$\phi \gets \mathbf{0}$ \\
2:\ \>\textbf{for} $t = 1, \ldots, T$ \textbf{do} \\
3:\ \>\>sample a batch $\mathcal{B} \subset \mathcal{D}$ \\
4:\ \>\>\textbf{for} each $x_i \in \mathcal{B}$ \textbf{do} \\
5:\ \>\>\>sample $y_i^{(1)}, \ldots, y_i^{(G)} \sim \pi_{\theta,\phi}(\cdot \mid x_i)$ \\
\>\>\>\textit{(bias per Eq.~\ref{eq:steer})} \\
6:\ \>\>\>$\hat a_i \gets \mathrm{majority\_vote}\big(\{\mathrm{ans} (y_i^{(g)})\}_{g=1}^{G}\big)$ \\
7:\ \>\>\>\textbf{for} $g = 1, \ldots, G$ \textbf{do} \\
8:\ \>\>\>\>$r_i^{(g)} \gets +1$ if $\mathrm{ans}(y_i^{(g)}) = \hat a_i$ else $-1$ \\
\>\>\>\>\textit{(Eq.~\ref{eq:reward})} \\
9:\ \>\>\>\textbf{end for} \\
10:\>\>\>$A_i^{(g)} \gets$ group-normalized advantage of $\{r_i^{(g)}\}_{g=1}^{G}$ \\
\>\>\>\textit{(Eq.~\ref{eq:advantage})} \\
11:\>\>\textbf{end for} \\
12:\>\>$\phi \gets \phi + \eta\,\nabla_\phi \sum_i \sum_g A_i^{(g)} \log \pi_{\theta,\phi}\big(y_i^{(g)} \mid x_i\big)$ \\
\>\>\textit{($\theta$ fixed)} \\
13:\ \>\textbf{end for} \\
14:\ \>\textbf{return} $\phi$
\end{tabbing}
\end{tcolorbox}
\caption{Label-Free Bias-Only TTRL}
\label{alg:ttrl}
\end{algorithm}

\begin{table*}[h]
\centering
\footnotesize
\resizebox{\textwidth}{!}{%
\begin{tabular}{@{}p{1.9cm}p{4.7cm}p{4.7cm}p{4.7cm}@{}}
\toprule
& Text & Vision-Language & Audio \\
\midrule
Base model & Qwen2.5-7B / Math-7B & Qwen2.5-VL-7B-Instruct & Qwen2.5-Omni-7B \\
Trainable params & 28 $\times$ 3584 $=$ 100{,}352 & same & same (text decoder only) \\
Param type & \texttt{down\_proj.bias}, all 28 MLP layers & same & same \\
Algorithm & GRPO, $G{=}32$--$64$ rollouts, majority vote & same & same \\
LR / schedule (best) & 1e-3, cosine, 200 steps & 1e-3, delayed-cosine (60 flat + 140 decay) & 1e-3, constant, 200 steps \\
Train data & MATH-500 (no separate train split) & MathVista testmini, AI2D test, LogicVista test & MMAU-mini test-mini \\
Labels & none, majority vote only & same & same \\
Eval & greedy, pass@1 & greedy, letter extraction, $n{=}1000$ / $3088$ / $448$ & greedy, letter extraction, $n{=}1000$ \\
Ckpt selection & last checkpoint (step 200) & same & same \\
\bottomrule
\end{tabular}}
\caption{Reproducibility and hyperparameter configuration summary.
Learning rate schedule was swept per task (constant vs.\ a cosine-decay variant) as an ordinary hyperparameter; the row above records the schedule used for training, rather than treating the schedule comparison itself as a reported finding.
Every run trains for a fixed budget of 200 steps, and ``Ckpt selection: last checkpoint (step 200)'' means the reported number for each headline result is the final checkpoint.}
\label{tab:repro}
\end{table*}

\section{Additional Ablations: Reward Shaping and Scale}
\label{app:rq2-extra}

This appendix expands the reward-shaping and scale ablations referenced in the main paper's \S\ref{sec:implementation} (Implementation) with the full detail behind each finding.

\paragraph{Reward shaping.}
ShorterBetter reaches a slightly higher peak than plain binary GRPO reward (75.2\% vs.\ 74.2\%, Qwen2.5-7B) while cutting average response length by $\approx$30\% (387$\to$253 words over the course of training), shorter and more accurate simultaneously, not a length/accuracy trade-off.

\paragraph{Model scale and math-specialization.}
Qwen2.5-Math-7B (78.0\% peak) outperforms Qwen2.5-7B base (74.6\%) by roughly 3pp, consistent with stronger math pretraining giving more headroom.
At 1.5B scale, both Qwen2.5-1.5B (+54pp) and Qwen2.5-Math-1.5B (+37pp) show large absolute gains from a near-zero base, though both plateau below their 7B counterparts.

\paragraph{Parameterization stability: a bias-free ReFT baseline \citep{wu2024reft}.}
Table~\ref{tab:main_results}'s LoRA (exact-match) baseline shows that trainable-parameter count alone does not determine label-free TTRL performance; we additionally test whether it determines \emph{training stability}, using a second parameter-matched but architecturally distinct intervention.
Following the low-rank interchange-intervention parameterization of ReFT-style methods, 
but dropping the bias term and orthogonality constraint (a ``NoDiReFT''-style variant), we inject $h' = h + W_2^\top(W_1 h)$ -- two rank-1 linear maps, no additive bias -- at the output of 14 evenly spaced decoder layers (0, 2, 4, \ldots, 26) via a forward hook, giving $14 \times 2 \times 3584 = 100{,}352$ trainable parameters, identical to bias-only steering and to LoRA (exact-match).
Training otherwise follows the same label-free majority-vote GRPO recipe over three seeds, with learning rate, step budget, and $G$ matched per task to bias-only's own configuration (MATH-500: lr$=5\times10^{-4}$, 200 steps, $G{=}64$; AI2D/LogicVista/MathVista/MMAU: lr$=10^{-3}$, 200 steps, $G{=}32$).

\begin{table}[h]
\centering
\scriptsize \setlength{\tabcolsep}{6pt}
\begin{tabular}{lccc}
\toprule
Benchmark & Baseline & Final step (reported) & Best step reached \\
\midrule
MATH-500 (Math-7B) & 56.0 & $61.60 \pm 19.28$ & $71.87 \pm 3.10$ \\
MATH-500 (Qwen2.5-7B) & 45.9 & $67.00 \pm 12.76$ & $73.93 \pm 1.21$ \\
AI2D & 73.58 & $60.31 \pm 29.74$ & $78.47 \pm 0.36$ \\
LogicVista & 37.50 & $23.29 \pm 20.63$ & $39.51 \pm 0.39$ \\
MathVista & 61.60 & $56.60 \pm 16.39$ & $66.00 \pm 2.62$ \\
MMAU & 57.40 & $53.53 \pm 1.05$ & $55.13 \pm 2.01$ \\
\bottomrule
\end{tabular}
\caption{Bias-free ReFT at the same 100K-parameter budget as bias-only steering; mean $\pm$ std over three seeds. Reported numbers are the final step, as everywhere else in this paper; the best step reached is shown to separate peak performance from end-of-training performance.}
\label{tab:reft}
\end{table}

The gap between the two columns of Table~\ref{tab:reft} is the finding: at an identical parameter budget, ReFT reaches a competitive peak but does not hold it.
Runs improve early, then degrade before the step budget ends, so the reported final-step numbers fall by up to 18 points below the best step reached and carry standard deviations of 12--30 points on five of six benchmarks -- behaviour consistent with a bilinear, KL-free intervention that has no mechanism pulling the policy back toward the reference model once it drifts.
Bias-only and LoRA, trained on the same budget and recipe, show no comparable degradation (Table~\ref{tab:main_results}).
Even at its peak, ReFT stays below the untrained baseline on MMAU and only marginally exceeds it on LogicVista.
This supports the design choice motivated in \S\ref{sec:pipeline}: at a matched trainable-parameter budget, not every restricted parameterization is equally trainable under a label-free, KL-free RL objective.

\section{Qualitative Audit: Full Detail}
\label{app:qual-extra}

This appendix expands the main paper's \S\ref{sec:rq3-audit} (Analysis of Multimodal Improvements) summary with full audit methodology, per-case examples, and the two underlying breakdown tables.

\paragraph{Audit methodology.}
We manually audited 10 randomly-sampled \emph{improved} examples (baseline wrong $\to$ steered correct, seed 42) and 10 randomly-sampled \emph{error} examples (steered still wrong, seed 7) for MathVista and MMAU, cross-checked against source images (MathVista) or textual self-consistency; for LogicVista we exhaustively tallied every helped/hurt/hard case over the full $n{=}448$ set and manually reviewed 7 \emph{improved} and 2 \emph{hurt} cases.
Beyond these samples, we ran full-dataset quantitative checks for every pattern they surfaced; Table~\ref{tab:qualexamples} shows one improved and one error case per benchmark.

\begin{table*}[h]
\centering
\scriptsize \setlength{\tabcolsep}{3pt} {\renewcommand{\arraystretch}{0.85}
\begin{tabular}{p{0.05\linewidth}p{0.28\linewidth}p{0.30\linewidth}p{0.07\linewidth}p{0.17\linewidth}}
\toprule
ID & Question (abridged) & Baseline $\to$ Steered & Gold & Verdict \\
\midrule
\multicolumn{5}{l}{\textbf{MathVista}} \\
\multicolumn{5}{l}{\emph{Improved (baseline wrong $\to$ steered correct)}} \\
320 & Right-triangle midpoints; find $DE$ ($AB{=}13$, $AC{=}5$, $\angle C{=}90^\circ$) & $DE{=}AB/2{=}6.5$ (wrong theorem) $\to$ $BC{=}\sqrt{13^2{-}5^2}{=}12$, $DE{=}6$ & 6 & Fixed; correct midsegment theorem \\
\midrule
\multicolumn{5}{l}{\emph{Error (steered still wrong)}} \\
210 & Age gap between two people in a photo & wrong numeric guess $\to$ answers ``0'' & nonzero & One of 49 age-gap questions; see mode-collapse analysis below \\
\midrule
\multicolumn{5}{l}{\textbf{LogicVista}} \\
\multicolumn{5}{l}{\emph{Improved (baseline wrong $\to$ steered correct)}} \\
v1\_203 & Compute a 2023 value after a 26\% production increase from 2022 (MCQ) & arithmetic left incomplete mid-calculation, D (wrong) $\to$ finished the chain ($20\text{K}\times0.26{=}5.2\text{K}$, $20\text{K}{+}5.2\text{K}{=}25.2\text{K}$), C (correct) & C & Fixed; completed the arithmetic chain baseline abandoned \\
\midrule
\multicolumn{5}{l}{\emph{Hurt (baseline correct $\to$ steered wrong)}} \\
v1\_0 & Which choice completes a row/column shape pattern (MCQ) & correctly tracked the row/column pattern $\to$ C (correct) $\to$ re-opened the analysis, fixated on irrelevant dot positions $\to$ D (wrong) & C & Over-correction: doubted its own already-correct reasoning \\
\midrule
\multicolumn{5}{l}{\textbf{MMAU}} \\
\multicolumn{5}{l}{\emph{Improved (baseline wrong $\to$ steered correct)}} \\
158 & Count words with $\geq$1 (un)stressed phoneme (153/1000 items are this template) & repetition-loop failure, empty prediction $\to$ ``five'' from 2 cited words & five & Confound: template alone drives $\sim$66\% of net gain \\
\midrule
\multicolumn{5}{l}{\emph{Error (steered still wrong, incl.\ regressions)}} \\
482 & Timestamp for a specific named chord & wrong answer, specific wording $\to$ near-identical wrong wording & -- & Persistent error; suggests convergence on a shared wrong pattern \\
\bottomrule
\end{tabular}}
\caption{MathVista, LogicVista, and MMAU: 1 improved $+$ 1 error/regression case each (MathVista rows drawn from 20 sampled cases; audio itself not directly verifiable for MMAU, audited for textual self-consistency and grading-code correctness).
LogicVista rows are drawn from 9 manually-reviewed cases (7 of the 77 helped cases, 2 of the 47 hurt cases); the remaining reviewed helped cases span shape/pattern discrimination, sequence completion, spatial reasoning, and simple physical intuition, and the other reviewed hurt case similarly shows symbol misidentification breaking otherwise-correct logic.
Images cross-checked against stated reasoning where applicable.}
\label{tab:qualexamples}
\end{table*}

\paragraph{MathVista and LogicVista: gains are largely genuine, with one narrow exception on MathVista.}
Both are vision-language MCQ/short-answer visual-reasoning tasks trained with the identical recipe.
On LogicVista, we did not find a single dominant confound: checked exhaustively across the full $n{=}448$ set, steering flips 77 baseline-wrong cases to correct and 47 baseline-correct cases to wrong (net $+$30 cases $\approx$ $+$6.7pp on this single audited run; the three-seed mean in Table~\ref{tab:main_results} is $+5.88$), with the remaining 203 cases wrong under both baseline and steered.
Accuracy rises monotonically across all four audited checkpoints (37.5\%$\to$38.6\%$\to$41.7\%$\to$42.6\%$\to$44.2\% at steps 0/20/80/160/200) with parse-failure falling alongside it (2.2\%$\to$0.7\% at step 160, ticking back up slightly to 1.1\% at step 200), the same benign pattern as AI2D's parser-failure reduction -- across all three vision-language benchmarks, the large majority of the reported gain reflects the model genuinely answering more problems correctly.
On MathVista, our exhaustive audit (not sampled) surfaced one narrow edge case worth flagging for transparency rather than a broad problem: the 1000-problem set contains 49 ``age gap between two people in a photo'' questions, and the steered model answers \emph{exactly} \texttt{0} on all 49 regardless of the true gap, correct only on the single case where gold happens to be 0 (reached via a hallucinated ``they are the same person'' claim we independently falsified against the source photo); baseline is also 0/49 correct on this template, mostly via honest ``cannot be determined'' refusals.
On this single audited run (net $+$6.2pp; three-seed mean $+3.80$ in Table~\ref{tab:main_results}), the template's net contribution is $+$1 problem out of 62 net-improved problems -- too small to meaningfully drive the headline number, but a useful illustration that majority-vote pseudo-labels can occasionally converge on a fixed answer rather than genuine reasoning on a narrow template.
It does not extend to any of the other 951 MathVista problems or to LogicVista.

\paragraph{MMAU: the phoneme-counting confound.}
The question template ``count the number of words that contain at least one [un]stressed phoneme'' appears 153/1000 times (15.3\%) in the full set.
Checked exhaustively (Table~\ref{tab:mmauconfound}):

\begin{table}[h]
\centering
\scriptsize \setlength{\tabcolsep}{3pt} {\renewcommand{\arraystretch}{0.85}
\begin{tabular}{lcccc}
\toprule
& $n$ & Baseline & Steered & $\Delta$ \\
\midrule
Phoneme subset & 153 & 40.5\% & 55.6\% & \posd{$+$15.1pp} \\
Everything else & 847 & 60.4\% & 61.9\% & \posd{$+$1.4pp} \\
\textbf{Full set} & \textbf{1000} & \textbf{57.4\%} & \textbf{60.9\%} & \posd{\textbf{+3.5pp}} \\
\bottomrule
\end{tabular}}
\caption{MMAU-mini, phoneme-counting subset vs.\ the rest of the set.}
\label{tab:mmauconfound}
\end{table}

Roughly two-thirds (23/35) of the entire reported MMAU improvement traces to this one 15\%-of-dataset template, on which neither model's stated reasoning logically derives its own stated answer: e.g.\ one response lists three word-mentions (``director,'' ``he's,'' ``he's'' again) as containing the target phoneme, then answers ``five.''
Precise phoneme-stress counting from raw audio is not a capability either model plausibly has; the pattern resembles the steered model learning, via majority-vote RL, to avoid baseline's ``count every word'' heuristic in favor of smaller numbers that correlate better with this dataset's answers, without evidence of genuine phonetic analysis.
An answer-letter distribution check rules out naive majority-class exploitation: gold answers skew toward \texttt{A} (37.7\%), but predicted-\texttt{A} rate \emph{falls} after steering (34.6\%$\to$31.1\%) while predicted-\texttt{B} rises (23.2\%$\to$30.7\%).
Regressions occur on 79/1000 MMAU problems vs.\ 56/1000 on MathVista; several show near-identical wording between baseline and steered outputs (id 482 above), suggesting convergence on a shared wrong pattern we did not isolate to a single template.

\paragraph{How the MMAU headline number should be read.}
We want to be explicit about what this does and does not leave standing, since the main paper reports MMAU as one of six improved benchmarks.
The $+3.5$pp headline gain on MMAU-mini is not uniformly distributed across the benchmark: it is concentrated on a single template covering $15.3\%$ of the set, and once that template is excluded the remaining $847$ problems improve by only $+1.4$pp.
A gain of that size is within the seed-to-seed variation we observe elsewhere in this paper, so we do not claim a demonstrated improvement on the non-template portion of MMAU.
The honest reading is that our audio result rests substantially on one question template whose gain we cannot attribute to genuine phonetic reasoning, and that MMAU is the weakest of our six benchmark results for this reason.
We report it because excluding a benchmark after seeing its audit would be a worse practice than reporting it with this caveat attached, and because the effect is diagnosable only through exactly this kind of per-template audit, which we would encourage for label-free test-time adaptation generally, since the reward provides no external signal that would flag such a shortcut during training.
The MathVista confound examined above is an order of magnitude smaller (one net problem out of 62) and does not carry the same weight.

\section{Labeled vs.\ Label-Free Vector Comparison: Full Detail}
\label{app:labelcompare-extra}

This appendix reports a full comparison between the learned label-free bias vector and a labeled bias-steering vector, examining whether label-free training produces a degenerate intervention or a task-specific change in model behavior.

Both compared vectors are all-layer \texttt{down\_proj.bias} vectors on Qwen2.5-7B; the label-free vector is trained with GRPO on MATH-500 itself, while the labeled comparison vector is Sinii \citep{sinii2025steering}-reproduced, trained with RLOO on 40K labeled DeepScaleR problems and evaluated here out-of-distribution, so the reward source, optimizer, and training data all differ, not only the label source.
Since the two checkpoints' originally-reported numbers used different context lengths, we re-evaluated both under one matched harness (same prompts, greedy decoding, context budget) on the same 500 MATH-500 problems: baseline 46.8\% (matching Sinii-repro's own reported step-0 exactly, confirming harness parity), label-free 72.0\%, labeled 75.6\%.
We additionally verified the baseline-vs-label-free delta's significance without relying on multi-seed training: a paired bootstrap (10,000 resamples) over this same 500-problem eval gives a 95\% CI of $[+20.6,+29.8]$pp for the accuracy delta, and McNemar's exact test on the 166 discordant predictions gives $p<10^{-20}$.

\paragraph{Relation to the label-source ablation in the main paper.}
This appendix and \S\ref{sec:comparison-prior} answer two different questions, and their labeled-vs-label-free comparisons therefore point in different directions; we state the distinction here so that neither result is read as contradicting the other.
The main-paper ablation (Table~\ref{tab:mainmath}) is a controlled comparison in which \emph{only} the reward source changes: both arms are our own runs, on the same MATH-500 problems, with the same optimizer, rollout count, learning rate, and step budget.
Under that control, removing ground-truth labels costs nothing at this parameter budget, and the pseudo-label arm is $1.2$--$2.2$ points ahead.
The comparison in this appendix is not controlled in that sense: the labeled vector is a reproduction of a different method, trained with a different optimizer (RLOO) on a different and much larger dataset (40K DeepScaleR problems), and evaluated here out-of-distribution on MATH-500.
Under those conditions it reaches $75.6\%$ against the label-free vector's $72.0\%$.
The two findings are consistent once the confound is accounted for: a label-free reward is not what costs accuracy relative to a labeled one at a matched recipe, but a labeled vector trained on $80\times$ more problems with a different optimizer can still be stronger on this evaluation.
We report both rather than only the favourable one, and we do not claim that label-free bias-only TTRL outperforms labeled bias steering in general; only that, holding data and optimizer fixed, the label source is not the binding constraint.

A per-sample confusion matrix shows the two vectors agree on \emph{which} problems they fix far more than chance would predict: 129/500 fixed by both, 26 labeled-only, 17 label-free-only, 4 regress under both, a Jaccard overlap of $129/(129{+}26{+}17){=}0.75$.
Spot-checking the disagreements shows ordinary arithmetic slips on the losing side, not a different problem-solving strategy.

Cosine similarity between the two 28-layer, 3584-dimensional vectors shows a layer-wise trend: $+$0.22 to $+$0.33 in the earliest layers (0--19\% depth, far from the $\approx$0 expected for two random vectors, though the overall cosine of $+$0.172 is itself modest), falling to $+$0.01 and finally $-$0.035 by the final layer, evidence the two vectors converge toward similar computation early and grow checkpoint-specific late; Appendix~\ref{app:rq4-extra} looks at the same divergence through a complementary logit-lens.

\section{Logit-Lens Comparison: Full Detail}
\label{app:rq4-extra}

This appendix reports the full per-depth logit-lens token comparison against labeled bias steering, complementing the vector-comparison detail in Appendix~\ref{app:labelcompare-extra} with additional model/task comparisons.

The full semantic progression on Qwen2.5-7B (label-free, MATH-500) shows layers 0--$\sim$61\% depth are semantic noise; ``solve''/``solving'' vocabulary emerges at $\sim$68--71\% depth; \texttt{\textbackslash boxed\{\}} becomes the top cosine-similarity token at $\sim$86--89\% depth; and causal connectors (``Therefore'', ``Thus'', ``First'') dominate the final layers.
This task- and teacher-specific character is confirmed across three further comparisons, each varying only model or task: our labeled Sinii-reproduction run shows a distinct late-layer vocabulary, detailed below (Table~\ref{tab:labelcomparetokens}); the Qwen2.5-Math-7B variant shows a different step-prefix cluster; and the COT-format MathVista and MMAU vectors (main paper \S\ref{sec:rq3-audit}) show modality-specific late-layer clusters rather than math-specific vocabulary.

\begin{table}[h]
\centering
\scriptsize \setlength{\tabcolsep}{3pt} \renewcommand{\arraystretch}{0.82}
\begin{tabular}{p{0.12\linewidth}p{0.42\linewidth}p{0.38\linewidth}}
\toprule
Depth & Label-free top tokens & Labeled (Sinii-repro) top tokens \\
\midrule
74\% & \texttt{solve}, \texttt{solving} & noisy \\
81\% & \texttt{Solution}, \texttt{solve} & \texttt{simpl}, \texttt{simplify} \\
89\% & \texttt{solution}, \texttt{boxed} & noisy \\
96\% & \texttt{solve}, \texttt{boxed}, \texttt{Therefore/Thus} & noisy (non-English) \\
100\% (final) & noisy/formatting & \texttt{To}, \texttt{\textbackslash}, \texttt{of} \\
\bottomrule
\end{tabular}
\renewcommand{\arraystretch}{1.0} \caption{Representative logit-lens tokens, label-free vs.\ labeled Sinii-reproduced \citep{sinii2025steering} (both Qwen2.5-7B, all-layer \texttt{down\_proj.bias}), at matched depths.}
\label{tab:labelcomparetokens}
\end{table}

The two vectors' tokens are disjoint at every depth in Table~\ref{tab:labelcomparetokens}, consistent with the near-zero/negative cosine similarity at these depths (Appendix~\ref{app:labelcompare-extra}).
Label-free's late-layer tokens cluster around \texttt{Solution}/\texttt{solve}, \texttt{\textbackslash boxed\{\}}, and step-connectives; labeled's cluster around \texttt{simplify}-family tokens at 81\% depth and a step-introduction token plus a raw LaTeX-continuation marker at the final layer, evidence of different lexical biases despite similar reasoning trajectories; we do not take this as proof of what each vector computes.

\paragraph{Format changes what the vector encodes, not just how well it performs.}
The boxed-format MathVista vector (delayed-cosine, the checkpoint behind the main paper's MathVista headline of 65.40\% in Table~\ref{tab:main_results}) gives a final-layer cluster qualitatively different from every other checkpoint: its top token is the digit \texttt{0} (cosine 0.24), alongside other digits, MCQ letters, and a backslash, literally the \texttt{\textbackslash boxed\{\}} answer alphabet, rather than the abstract connectives or negation tokens seen in the text MATH-500 and COT-format MathVista checkpoints.

\section{Additional Theoretical Validation Details}
\label{app:theory-extra}

This appendix expands the main paper's theoretical grounding (Method \S\ref{sec:theory}) with full proof sketches, kept here for transparency and space rather than in the main text.

\subsection{Full Statements and Proof Sketches}
\label{app:theory-proofs}

\paragraph{Proposition 1 setup.}
For an unlabeled input $x$, let the model induce an answer distribution $p(a\mid x)$ over $K$ possible answers, with $a^\star$ the correct answer.
Sample $G$ i.i.d.\ rollouts $A_1,\ldots,A_G\sim p(\cdot\mid x)$ and let $N_a=\sum_{g=1}^G \mathbf{1}[A_g=a]$; the majority-vote pseudo-label is $\hat a_G=\arg\max_a N_a$ (ties count as failures, a conservative choice).
Define the answer margin $\Delta(x)=p(a^\star\mid x)-\max_{a\neq a^\star}p(a\mid x)$ and assume $\Delta(x)>0$: the correct answer is the model's unique mode.

\begin{proposition}[Pseudo-Label Reliability]
If $\Delta(x)>0$,
\[
\Pr[\hat a_G \neq a^\star] \;\le\; (K-1)\exp\!\left(-\frac{G\Delta(x)^2}{2}\right).
\]
Pseudo-label failure under majority voting decays exponentially in $G$ for positive-margin problems.
\end{proposition}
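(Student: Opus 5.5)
Since ties count as failures, the majority vote fails only if some wrong answer collects at least as many votes as the correct one. We therefore apply a union bound over the $K-1$ incorrect answers:
\[
\Pr[\hat a_G\neq a^\star]\;\le\;\sum_{a\neq a^\star}\Pr\!\left[N_a\ge N_{a^\star}\right].
\]
It then suffices to show that each term is at most $\exp(-G\Delta(x)^2/2)$.

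For a fixed $a\neq a^\star$, write $N_{a^\star}-N_a=\sum_{g=1}^G Z_g$ with $Z_g=\mathbf{1}[A_g=a^\star]-\mathbf{1}[A_g=a]$. The $Z_g$ are i.i.d., take values in $\{-1,0,1\}$, and have mean
\[
\mu_a=p(a^\star\mid x)-p(a\mid x)\;\ge\;\Delta(x)\;>\;0.
\]
The event $N_a\ge N_{a^\star}$ is exactly $\sum_g Z_g\le 0$, i.e.\ the empirical mean falls at least $\mu_a$ below its expectation. Hoeffding's inequality for variables in an interval of length $2$ gives
\[
\Pr\!\left[\sum_g Z_g\le 0\right]\le\exp\!\left(-\frac{2G^2\mu_a^2}{G\cdot 2^2}\right)=\exp\!\left(-\frac{G\mu_a^2}{2}\right)\le\exp\!\left(-\frac{G\Delta(x)^2}{2}\right),
\]
where the last step uses $\mu_a\ge\Delta(x)>0$. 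Summing over the $K-1$ wrong answers yields \eqref{eq:prop1}.

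The main thing to check is the range constant in Hoeffding's inequality. The $Z_g$ lie in $[-1,1]$, a range of width $2$, and it is exactly this width that produces the factor $1/2$ in the exponent. A sharper Bernstein-type bound would use the smaller variance $p(a^\star\mid x)+p(a\mid x)-\mu_a^2$, but the stated form only needs plain Hoeffding. We also need a convention for ties and for $K$ possibly being infinite; taking ties as failures and letting $K$ count the answers with positive probability covers both.
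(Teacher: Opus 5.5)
Your proposal is correct and matches the paper's proof step for step. Both use the difference variables $Z_g=\mathbf{1}[A_g=a^\star]-\mathbf{1}[A_g=a]\in[-1,1]$, apply Hoeffding's inequality to the event $\sum_g Z_g\le 0$ (where the range width $2$ produces the factor $1/2$ in the exponent), and finish with a union bound over the $K-1$ incorrect answers; your explicit Hoeffding constant check and the tie convention are just more detailed versions of the same steps.
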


\begin{proof}
\begin{itemize}[leftmargin=1.4em]
\item For any $a\neq a^\star$, let $Z_g=\mathbf{1}[A_g=a^\star]-\mathbf{1}[A_g=a]\in[-1,1]$.
\item Then $\mathbb{E}[Z_g]=p(a^\star\mid x)-p(a\mid x)\ge\Delta(x)>0$ and $\sum_g Z_g = N_{a^\star}-N_a$.
\item Answer $a$ can only tie or beat $a^\star$ if $\frac{1}{G}\sum_g Z_g\le 0$.
\item Hoeffding's inequality bounds this event by $\exp(-G\Delta(x)^2/2)$.
\item A union bound over the $K-1$ incorrect answers gives the proposition.
\end{itemize}
\end{proof}

\paragraph{Theorem 1 setup.}
Let $J(w)$ be the RL objective (the same advantage-weighted log-probability objective formalized in Algorithm~\ref{alg:ttrl}) and $g=\nabla J(w)$ the full gradient.
Restricting adaptation to a bias subspace $\mathcal{S}$ with orthogonal projection $P_\mathcal{S}$, only the component $P_\mathcal{S}g$ is reachable; define the accessible gradient energy $E_\mathcal{S}=\lVert P_\mathcal{S}g\rVert^2$ and the restricted update $w^+=w+\eta P_\mathcal{S}g$.

\begin{theorem}[Restricted-Subspace Trainability]
If $J$ is locally $L$-smooth,
\[
J(w^+)-J(w)\;\ge\; \eta\Big(1-\tfrac{L\eta}{2}\Big)\lVert P_\mathcal{S}g\rVert^2,
\]
and for $0<\eta\le 1/L$, $J(w^+)-J(w)\ge \tfrac{\eta}{2}\lVert P_\mathcal{S}g\rVert^2$ (the bound reported in the main paper).
\end{theorem}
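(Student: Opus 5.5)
The argument is the standard ascent lemma for $L$-smooth functions, applied along the restricted direction $d=P_{\mathcal S}g$. Since $J$ is being maximized, I would use the lower quadratic bound implied by local $L$-smoothness. For $w'$ in the neighbourhood where smoothness holds,
\[
J(w') \;\ge\; J(w) + \langle \nabla J(w),\, w'-w\rangle - \tfrac{L}{2}\lVert w'-w\rVert^2 .
\]
This follows from writing $J(w')-J(w)=\int_0^1\langle \nabla J(w+t(w'-w)),w'-w\rangle\,dt$ and bounding the deviation of the gradient from $\nabla J(w)$ by $Lt\lVert w'-w\rVert$ via Cauchy--Schwarz. I will assume that "locally" means the segment from $w$ to $w^+=w+\eta d$ stays inside this neighbourhood. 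I would state that assumption explicitly, since it is the only place where care is needed.

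Next I would substitute $w'=w^+$, so that $w'-w=\eta P_{\mathcal S}g$. The linear term is $\eta\langle g, P_{\mathcal S}g\rangle$. Because $P_{\mathcal S}$ is an orthogonal projection, it is symmetric and idempotent. Hence
\[
\langle g,P_{\mathcal S}g\rangle=\langle P_{\mathcal S}g,P_{\mathcal S}g\rangle=\lVert P_{\mathcal S}g\rVert^2 = E_{\mathcal S}.
\]
This is the key structural step: the first-order gain involves only the accessible component of the gradient, not the full norm $\lVert g\rVert^2$. The quadratic term is $\tfrac{L}{2}\eta^2\lVert P_{\mathcal S}g\rVert^2$. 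Combining the two terms gives
\[
J(w^+)-J(w)\ \ge\ \eta\bigl(1-\tfrac{L\eta}{2}\bigr)\lVert P_{\mathcal S}g\rVert^2 .
\]

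Finally, for $0<\eta\le 1/L$ we have $L\eta/2\le 1/2$, so $1-L\eta/2\ge 1/2$. This yields the bound $\tfrac{\eta}{2}E_{\mathcal S}$, which is the form stated in Theorem~\ref{thm:restricted_subspace}.

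No step here is genuinely difficult. The main subtlety is justifying the smoothness inequality along the whole segment, which is what the locality assumption provides. A secondary point is to note that the bias parameterization $\phi$ is a coordinate subspace of the full parameter vector $w$. Its orthogonal projection $P_{\mathcal S}$ therefore simply zeroes out the non-bias coordinates, so $P_{\mathcal S}g=\nabla_\phi J$. This identifies the restricted update with the gradient step actually taken on $\phi$ in Algorithm~\ref{alg:ttrl}, and it identifies $E_{\mathcal S}$ with the per-layer quantity $E_l$ measured in \S\ref{sec:rq2}.
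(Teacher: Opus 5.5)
Your proposal is correct and follows the paper's proof essentially step for step. Both use the lower quadratic bound from $L$-smoothness, substitute $d=\eta P_{\mathcal S}g$, apply idempotence and self-adjointness to get $\langle g,P_{\mathcal S}g\rangle=\lVert P_{\mathcal S}g\rVert^2$, and bound $1-L\eta/2\ge 1/2$ for $\eta\le 1/L$; your remarks on keeping the segment inside the smoothness neighbourhood and on $P_{\mathcal S}$ being a coordinate projection onto the bias parameters make explicit what the paper leaves implicit.
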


\begin{proof}
\begin{itemize}[leftmargin=1.4em]
\item $L$-smoothness gives $J(w+d)\ge J(w)+\langle \nabla J(w),d\rangle - \tfrac{L}{2}\lVert d\rVert^2$.
\item Set $d=\eta P_\mathcal{S}g$ and use $\langle g,P_\mathcal{S}g\rangle=\lVert P_\mathcal{S}g\rVert^2$ (a projection is idempotent and self-adjoint) to get
\[
J(w^+)-J(w)\ge \eta\lVert P_\mathcal{S}g\rVert^2 - \tfrac{L\eta^2}{2}\lVert P_\mathcal{S}g\rVert^2,
\]
which is the stated bound.
\item The $\eta\le1/L$ case follows by bounding its coefficient below by $1/2$.
\item Writing $\rho_\mathcal{S}=\lVert P_\mathcal{S}g\rVert^2/\lVert g\rVert^2$ for the accessible-gradient fraction, $J(w^+)-J(w)\ge\eta(1-\tfrac{L\eta}{2})\rho_\mathcal{S}\lVert g\rVert^2$: local improvement scales with $E_\mathcal{S}$, not with $\dim(\mathcal{S})$.
\end{itemize}
\end{proof}